\documentclass[11pt]{article}

\clubpenalty=10000
\widowpenalty = 10000

\usepackage{algorithm}
\usepackage{scalerel}
\usepackage{mathrsfs}
\usepackage{enumitem}
\usepackage{bbm}
\usepackage{tikz}
\usepackage{float}
\usetikzlibrary{backgrounds}
\usepackage{color}
\usepackage{graphicx}
\usepackage{latexsym}
\usepackage{amsfonts}
\usepackage{pifont,xspace,fullpage,epsfig, wrapfig}
\usepackage{amsmath, amssymb, amsthm} %
\usepackage{multirow}
\usepackage{dsfont}
\usepackage{array}
\usepackage{adjustbox}

\usepackage{bookmark}

\usepackage{float}

\DeclareSymbolFont{AMSb}{U}{msb}{m}{n}
\DeclareMathSymbol{\N}{\mathbin}{AMSb}{"4E}
\DeclareMathSymbol{\Z}{\mathbin}{AMSb}{"5A}
\DeclareMathSymbol{\R}{\mathbin}{AMSb}{"52}
\DeclareMathSymbol{\Q}{\mathbin}{AMSb}{"51}
\DeclareMathSymbol{\erert}{\mathbin}{AMSb}{"50}
\DeclareMathSymbol{\I}{\mathbin}{AMSb}{"49}
\DeclareMathSymbol{\C}{\mathbin}{AMSb}{"43}

\definecolor{gray}{gray}{0.4}

\newcommand{\remove}[1]{}

\newtheorem{theorem}{Theorem}[section]
\newtheorem{lemma}[theorem]{Lemma}
\newtheorem{definition}[theorem]{Definition}
\newtheorem{remark}[theorem]{Remark}
\newtheorem{notation}[theorem]{Notation}

\newtheorem{claim}[theorem]{Claim}

\newtheorem{example}[theorem]{Example}

\newcommand{\1}{\mathbbm{1}}

\newcommand{\AAA}{\mathcal A}
\newcommand{\BB}{\mathcal B}
\newcommand{\BBB}{\mathcal B}

\newcommand{\DDD}{\mathcal D}

\newcommand{\eps}{\varepsilon}

\newcommand{\Lap}{\operatorname{\rm Lap}}
\newcommand{\Noise}{\operatorname{\rm Noise}}

\def\E{\operatorname*{\mathbb{E}}}

\def\Q{\operatorname*{\mathbb{Q}}}

\def\Lap{\mathop{\rm{Lap}}\nolimits}

\makeatletter
\newlength{\fboxhsep}
\newlength{\fboxvsep}
\newlength{\fboxtoprule}
\newlength{\fboxbottomrule}
\newlength{\fboxleftrule}
\newlength{\fboxrightrule}
\setlength{\fboxhsep}{3\fboxsep}
\setlength{\fboxvsep}{3\fboxsep}
\setlength{\fboxtoprule}{\fboxrule}
\setlength{\fboxleftrule}{\fboxrule}
\setlength{\fboxrightrule}{\fboxrule}
\setlength{\fboxbottomrule}{\fboxrule}
\def\@frameb@xother#1{%
  \@tempdima\fboxtoprule
  \advance\@tempdima\fboxvsep
  \advance\@tempdima\dp\@tempboxa
  \hbox{%
    \lower\@tempdima\hbox{%
      \vbox{%
        \hrule\@height\fboxtoprule
        \hbox{%
          \vrule\@width\fboxleftrule
          #1%
          \vbox{%
            \vskip\fboxvsep
            \box\@tempboxa
            \vskip\fboxvsep}%
          #1%
          \vrule\@width\fboxrightrule}%
        \hrule\@height\fboxbottomrule}%
    }%
  }%
}
\long\def\fboxother#1{%
  \leavevmode
  \setbox\@tempboxa\hbox{%
    \color@begingroup
    \kern\fboxhsep{#1}\kern\fboxhsep
    \color@endgroup}%
  \@frameb@xother\relax}

\makeatother

\newcommand{\adaptivealg}[2]{\texttt{Adaptive}_{#1 {\rightleftarrows} #2}}

\begin{document}

\begin{titlepage}

\title{The Sparse Vector Technique, Revisited}
\author{
Haim Kaplan\thanks{Tel Aviv University and Google Research. \texttt{haimk@tau.ac.il}. Partially supported by the Israel Science Foundation (grant 1595/19), the German-Israeli Foundation (grant 1367/2017), and by the Blavatnik Family Foundation.}
\and
Yishay Mansour\thanks{Tel Aviv University and Google Research. \texttt{mansour.yishay@gmail.com}. This project has received funding from the European Research Council (ERC) under the European Union's Horizon 2020 research and innovation program (grant agreement No. 882396), and by the Israel Science Foundation (grant number 993/17).}
\and
Uri Stemmer\thanks{Ben-Gurion University and Google Research. \texttt{u@uri.co.il}. Partially Supported by the Israel Science Foundation (grant 1871/19) and by the Cyber Security Research Center at Ben-Gurion University of the Negev.}
}

\date{November 16, 2020}
\maketitle
\setcounter{page}{0} \thispagestyle{empty}

\begin{abstract}
We revisit one of the most basic and widely applicable techniques in the literature of differential privacy -- the {\em sparse vector technique} [Dwork et al.,\ STOC 2009]. This simple algorithm privately tests whether the value of a given query on a database is close to what we expect it to be. It allows to ask an unbounded number of queries as long as the answer is close to what we expect, and halts following the first query for which this is not the case.

We suggest an alternative, equally simple, algorithm that can continue testing queries as long as any single individual does not contribute to the answer of too many queries whose answer deviates substantially form what we expect. Our analysis is subtle and some of its ingredients may be more widely applicable.
In some cases our new algorithm allows to privately extract much more information from the database than the original.

We demonstrate this by applying our algorithm to the {\em shifting heavy-hitters} problem: On every time step, each of $n$ users gets a new input, and the task is to privately identify all the {\em current} heavy-hitters. That is, on time step $i$, the goal is to identify all data elements $x$ such that many of the users have $x$ as their current input. We present an algorithm for this problem with improved error guarantees over what can be obtained using existing techniques. Specifically, the error of our algorithm depends on the maximal number of times that a {\em single user} holds a heavy-hitter as input, rather than the total number of times in which a heavy-hitter {\em exists}.
\end{abstract}

\end{titlepage}

\section{Introduction}

Differential privacy~\cite{DMNS06} is a mathematical definition for privacy, that aims to enable statistical analyses of databases while providing strong guarantees that individual-level information does not leak. More specifically, consider a database containing data pertaining to individuals, and suppose that we have some data analysis procedure that we would like to apply to this database.
We say that this procedure preserves {\em differential privacy} if no individual's data has a significant effect on the distribution of the outcome of the procedure. Intuitively, this guarantees that whatever is learned about an individual from the outcome of the computation could also be learned with her data arbitrarily modified (or without her data). Formally,

\begin{definition}[\cite{DMNS06}]\label{def:DP}
A randomized algorithm $\AAA$ is $(\eps,\delta)$-{\em differentially private} if for every two databases $S,S'$ that differ on one row, and every set of outcomes $F$, we have
$$\Pr[\AAA(S)\in F]\leq e^{\eps}\cdot \Pr[\AAA(S')\in F]+\delta.$$
\end{definition}

Over the last decade, we have witnessed an explosion of research on differential privacy, and by now it is largely accepted as a gold-standard for privacy preserving data analysis. 
One of the main properties that helped propel differential privacy to its current state is that differential privacy is closed under composition~\cite{DRV10}. This important property allows the design of differentially private algorithms to be modular: One can design complex differentially private algorithms by combining several (existing) differentially private algorithms and using them as subroutines. 
However, the downside with composition is that it degrades the privacy guarantees. Specifically, composing $k$ differentially private algorithms degrades the privacy guarantees by (roughly) a factor of $\sqrt{k}$. As a result, there are cases in which directly applying composition theorems is not the  way to obtain the best results, and different techniques are needed. In this work we revisit one of the most basic (and widely applicable) of these techniques, known as the {\em sparse vector technique}~\cite{DNRRV09}.

\paragraph{The sparse vector technique.}
Consider a large number of {\em counting queries} $f_1,\dots,f_m:X\rightarrow\{0,1\}$, which are given one by one to a data curator holding a database $S\in X^n$. In every round, we would like to receive, in a differentially private manner, an approximation to $f_i(S)\triangleq\sum_{x\in S}f_i(x)$. 
(A {\em counting query} asks for the number of records in $S$ that satisfy a given predicate, e.g.,\ the number of individuals with blue eyes and diabetes.) This task can be solved by answering each query in a differentially private manner (by adding an appropriately sampled noise), and using composition theorems to argue about the overall privacy guarantees. Due to the costs incurred by composition, this approach requires a database of size $n\gtrsim\sqrt{m}$ in order to answer $m$ such counting queries. In many cases, however, we might only be interested in obtaining answers for the queries $f_i$ whose answers are ``meaningful''. More specifically, we have some threshold $t$ in mind, and we only care to receive answers to queries s.t.\ $f_i(S)\geq t$. Dwork, Naor, Reingold, Rothblum, and Vadhan~\cite{DNRRV09} presented a simple (and elegant) tool for such a scenario -- algorithm \texttt{AboveThreshold}. This algorithm is capable of identifying (and answering) the {\em first} meaningful query from a stream of queries, while privately ignoring all the (non-meaningful) queries that came before it. The key observation of Dwork et al.\ is that privately ignoring the non-meaningful queries comes (essentially) for free. Informally, algorithm \texttt{AboveThreshold} can be described as follows (see Section~\ref{sec:prelims} for a precise description).

\begin{center}
\noindent\fboxother{
\parbox{.95\columnwidth}{
{\bf Algorithm \texttt{AboveThreshold} (informal)}\\[0.3em]
{\bf Input:} Database $S$, threshold $t$, and a stream of counting queries $f_i:X\rightarrow\{0,1\}$.
\begin{enumerate}[topsep=-1pt,rightmargin=5pt,itemsep=-1pt]%
\item Let $\hat{t}\leftarrow t+\Noise$.
\item In each round $i$, when receiving a query $f_i$, do the following:
\begin{enumerate}[topsep=-3pt,rightmargin=5pt]%
\item Let $\hat{f_i}\leftarrow f_i(S)+\Noise$.
\item If $\hat{f_i} \geq \hat{t}$, then output $\top$ and halt.
\item Otherwise, output $\bot$ and proceed to the next iteration.
\end{enumerate}
\end{enumerate}

}}
\end{center}

Once algorithm \texttt{AboveThreshold} identifies a ``meaningful'' query (and halts), estimating the value of this ``meaningful'' query can be done using standard differentially private noise addition techniques. In addition, by re-executing the algorithm after every such meaningful query (and using composition theorems to argue about the overall privacy guarantees), we can answer $c$ meaningful queries (out of a stream of $m$ queries) using a database of size $n=\tilde{O}\left(\sqrt{c}\cdot\log m\right)$. When the number of meaningful queries $c$ is much smaller than the total number of queries $m$ (hence the name {\em sparse vector}), this technique achieves a dramatic improvement compared to what we get when answering {\em all} queries. The sparse vector technique has proven itself to be one of the most important tools in the literature of differential privacy, with many applications (e.g.,~\cite{PMW_HR10,DFHPRR14,NSV16,NissimS18,KaplanSS20,KaplanLMNS20,HassidimKMMS20,BunSU16,BassilyTT18,CummingsKLZ19,LigettNRWW17,BartheFGGHS16,BartheGGHS16,SteinkeU16,CummingsKRW15,Ullman15,NandiB20,ShokriS15,hsu2013differential,sajed2019optimal,FeldmanS17,blum2015privacy}).

\subsection{Our contributions}

As we mentioned, algorithm \texttt{AboveThreshold} of Dwork et al.\ can be thought of as a technique for bypassing composition barriers (because instead of answering non-meaningful queries, paying in composition, we ``privately ignore'' them for free). However, in order to answer $c$ meaningful queries, we still use composition theorems when we re-execute the algorithm after every meaningful query. Do we lose something with this approach?

We present a variant of the algorithm \texttt{AboveThreshold} in which this transition between one meaningful query to the next is done ``more smoothly'', without re-executing the algorithm. While we cannot avoid the composition cost altogether, we account for it in a more fine-tuned way, which, in some cases, leads to a significant improvement. 
Recall that using algorithm \texttt{AboveThreshold} we can answer $c$ meaningful queries (out of a stream of $m$ queries) using a database of size $n=\tilde{O}\left(\sqrt{c}\cdot\log m\right)$.
Loosely speaking, with our algorithm, the factor $\sqrt{c}$ in the requirement on the database size is replaced with $\sqrt{k^*}$, where $k^*$ is the maximum number of times in which a {\em single} record in the database {\em contributes} to a ``meaningful'' answer. That is,
$$
k^*=\max_{x\in S}\left|\left\{  i : f_i(S)\geq t \text{ and } f_i(x)=1 \right\}\right|.
$$
Note that $k^*$ is always at most $c=\left|\left\{i : f_i(S)\geq t\right\}\right|$, and that it can be significantly smaller than $c$.

\subsubsection{Our algorithm}

At its most basic form, our algorithm can be informally described as follows. We remark that this presentation is oversimplified, and does not capture the full generality of our algorithm (in particular, the connection to $k^*$, mentioned above, will only become clear later).

\begin{center}
\noindent\fboxother{
\parbox{.95\columnwidth}{
{\bf Algorithm \texttt{ThresholdMonitor} (simplified)}\\[0.3em]
{\bf Input:} Database $S$, threshold $t$, and a stream of counting queries $f_i:X\rightarrow\{0,1\}$.
\begin{enumerate}[topsep=-1pt,rightmargin=5pt,itemsep=-1pt]%
\item In each round $i$, when receiving a query $f_i$, do the following:
\begin{enumerate}[topsep=-3pt,rightmargin=5pt]%
\item Let $\hat{f_i}\leftarrow f_i(S)+\Noise$.
\item If $\hat{f_i} < t $, then output $a_i=\bot$.
\item Otherwise, output $a_i=\top$ and delete from $S$ every $x$ such that $f_i(x)=1$. 
\end{enumerate}
\end{enumerate}

}}
\end{center}

The main difference from the standard \texttt{AboveThreshold} algorithm is that instead of halting after the first $\top$, the algorithm deletes all input elements $x\in S$ who ``contribute'' to the query that generated this $\top$, (that is, all $x\in S$ such that $f_i(x)=1$) and continues. %

\begin{remark}
In the actual construction, instead of deleting elements from the data after the first time they ``contribute'' to a $\top$ answer, we delete them only after their $k$th contribution (for a given parameter $k\in\N$). We show that this degrades the privacy guarantees of our algorithm by at most a factor of $\approx\sqrt{k}$. %
We also extend the construction to handle {\em linear} queries, i.e., queries that range over $[0,1]$ (rather than $\{0,1\}$). This means that the ``contribution'' of a data element to the value of a query is not binary as we present it here in the introduction.
\end{remark}

The privacy analysis of \texttt{ThresholdMonitor} builds on the privacy analysis of the standard \texttt{AboveThreshold} algorithm~\cite{DNRRV09}, in particular, on the analysis given by Hardt and Rothblum~\cite{PMW_HR10}. However, our analysis requires new ideas in order to show that by deleting some of the input elements we can avoid halting the algorithm after the first $\top$. We now give an informal (and simplified) presentation of the privacy analysis of \texttt{ThresholdMonitor}. 
Any informalities made hereafter will be removed in the sections that follow.

Fix two neighboring databases $S$ and $S'=S\cup\{x'\}$, and consider the execution of (the simplified) algorithm \texttt{ThresholdMonitor} on $S$ and on $S'$. Observe that once we reach an iteration $i^*$ in which $a_{i^*}=\top$ and $f_{i^*}(x')=1$ then $x'$ is deleted from the data (during the execution on $S'$). After this iteration, we have that $S$ and $S'$ are identical, and the two executions proceed exactly the same. Moreover, time steps $i\leq i^*$ in which $f_i(x')=0$ are also not very interesting, since in these time steps we have that $f_i(S)=f_i(S')$, and so the two executions behave the same.

Therefore, for the privacy analysis, we only need to argue about time steps $i\leq i^*$ such that $f_i(x')=1$.
At a high level, we partition these time steps into two sets:
$$I_{\rm far}=\{i\leq i^* : f_i(x')=1 \text{ and } f_i(S)\ll t\}, \qquad\text{and}\qquad I_{\rm almost}=\{i\leq i^* : f_i(x')=1 \}\setminus I_{\rm far}.$$ 
We show that, essentially, time steps in $I_{\rm far}$ come ``for free'', and we do not need to pay for them in the privacy analysis. Intuitively, the reason is that in these time steps we have that both $f_i(S)$ and $f_i(S')$ are very far from the threshold $t$, and in both executions, with overwhelming probability, we will get that $a_i=\bot$. 

For time steps in $I_{\rm almost}$ we do need to pay in the privacy analysis (by arguing about the noises that we add to the values of the queries). What we show, however, is that there cannot be too many  time steps in $I_{\rm almost}$. Intuitively, in every time step $i\in I_{\rm almost}$ there is a noticeable probability of returning $a_i=\top$, in which case $i=i^*$ is the last index in $I_{\rm almost}$. 

\medskip
\noindent
{\bf First challenge in the analysis.}
{\em
The partition of the time steps into $I_{\rm far}$ and $I_{\rm almost}$ depends on $i^*$, which depends on the noises that we add to the values of the queries at each time step. This means that conditioning on $i^*$ (or on $I_{\rm far}$ and $I_{\rm almost}$) changes the distribution of these noises. Can we still use them to argue about the privacy costs of the time steps in $I_{\rm almost}$?
}

\medskip
\noindent
{\bf Resolution.} 
We overcome this issue using the following trick. Instead of adding a single (Laplace) noise to the value of each query, we add {\em two} independent noises: $\hat{f}_i\leftarrow f_i(S)+w_i+v_i$, where $w_i$ and $v_i$ are independent random noises, sampled from the Laplace distribution. This allows us to define the partition of the time steps based on the $w_i$'s, while arguing about the privacy loss based on the $v_i$'s. So now time steps in $I_{\rm far}$ are time steps $i$ such that $f_i(S)+w_i\ll t$. Even when conditioning on that, $v_i$ is still distributed according to the Laplace distribution, and hence, can mask the difference between $f_i(S)$ and $f_i(S')$ (as is standard in the literature of differential privacy). This idea is general and could be applicable in other cases:
It  imposes a convenient-to-analyze structure on the probability spaces generated by the noises so that it is easier to argue in conditional subspaces.

\medskip
\noindent
{\bf Second challenge in the analysis.}
{\em
Do time steps in $I_{\rm far}$ really come for free?
}

\medskip
\noindent
{\bf Resolution.} 
As we described it, in these time steps, {\em with overwhelming probability} over sampling $v_i$ we have that $a_i=\bot$ (in both executions). This can be used to show that these time steps increase our privacy parameter $\delta$ by at most a factor of $m$, where $m$ bounds the total number of time steps. While this privacy loss is not too bad (since $\delta$ is typically negligible), we want to avoid it. In particular, the privacy guarantees of the standard \texttt{AboveThreshold} algorithm do not depend on the number of time steps $m$, and we would like to preserve this benefit in our analysis. To achieve this we modify the distribution of the noises $v_i$ (by capping them at a certain value) in order to make sure that for the vast majority of time steps $i\in I_{\rm far}$ we have that $a_i=\bot$ {\em with probability one} during both executions (over sampling $v_i$). This means that the vast majority of time steps in $I_{\rm far}$ really come for free (in terms of the privacy analysis). This allows us to avoid blowing up the privacy parameter $\delta$ by a factor of $m$.
In addition, we show that this modification to the distribution of the $v_i$'s does not break our privacy analysis for time steps $i\in I_{\rm almost}$.

\subsubsection{The shifting heavy-hitters problem} 

We demonstrate our technique by applying it to the {\em shifting heavy-hitters} problem. In this problem there are $n$ users and a data domain $X$. On every time step $i\in[m]$, every user $j\in[n]$ gets an input $x_{i,j}\in X$, and the goal is to report all the {\em current heavy-hitters}. That is, on time step $i$, the goal is to identify all data elements $x$ such that 
$$w_i(x)\triangleq\left|\left\{ j\in[n] : x_{i,j}=x  \right\}\right|\gg 0.$$

\begin{definition}
We say that an algorithm $\AAA$ solves the shifting heavy-hitters problem with error parameter $\tau$ and failure probability $\beta$ if it guarantees the following with probability at least $(1-\beta)$:
\begin{enumerate}
	\item The algorithm never identifies elements with weight zero as being heavy. That is, on every time step $i$, if the algorithm identifies an element $x$ then $w_i(x)>0$.
	\item On every time step $i$ the algorithm identifies every data element $x$ such that $w_i(x)\geq\tau$.
\end{enumerate}
\end{definition}

The shifting heavy-hitters problem generalizes the (standard) heavy-hitters problem, in which $m=1$. 
Algorithms for the (standard) heavy-hitters problem have found many applications in both the theory and the practice of differential privacy, including some of the most widespread industrial deployments of differential privacy to date. However, many of these deployments periodically recollect the data in order to re-identify the heavy-hitters,	e.g., on a daily basis. As a result, the problem that actually underlies these deployments is the {\em shifting heavy-hitters} problem. %

A naive approach for privately solving the shifting heavy-hitters problem would be to run a differential private algorithm for histograms on every time step $i\in[m]$. However, such a solution would have error at least $\sqrt{m}$ because of composition. 
This solution can be improved by using the (standard) sparse vector technique in order to identify the time steps in which a heavy-hitter exists, and only run a private algorithm for histograms during these identified time steps. However, in general, the number $k$ of time steps in which a heavy-hitter exists can be quite large, and the error with this approach scales with $\sqrt{k}$. We present a refined solution in which the error only scales with $\sqrt{k^*}$, where $k^*$ is the maximal number of times that a {\em single user} holds a heavy-hitter as input.

\begin{example}
To illustrate the difference between $k$ and $k^*$, consider an input stream to the shifting heavy-hitters problem with the following properties:
\begin{enumerate}
	\item There are three partitions of the set of users $[n]$:
	\begin{itemize}
		\item $A_1,A_2,\dots,A_{\tilde{O}(n)}\subseteq[n]$, where every $A_i$ is of size $\tilde{O}(1)$.
		\item $B_1,B_2,\dots,B_{\tilde{O}\left(n^{3/4}\right)}\subseteq[n]$, where every $B_i$ is of size $\tilde{O}\left(n^{1/4}\right)$.
		\item $C_1,C_2,\dots,C_{\tilde{O}\left(n^{2/3}\right)}\subseteq[n]$, where every $C_i$ is of size $\tilde{O}\left(n^{1/3}\right)$.
	\end{itemize}
	\item Every such subset gets a ``heavy-hitter'' exactly once throughout the stream, at different time steps.
	\item Otherwise, users get distinct inputs.
\end{enumerate} 
Observe that every user ``participates'' in three heavy-hitter occurrences, and hence, $k^*=3$. Indeed, in this example, our algorithm is capable of identifying {\em all} of the $\tilde{O}(n)$ heavy-hitters throughout the execution. In contrast, using the standard sparse vector technique only allows us to identify the heavy-hitters generated by the $C_i$'s. To see this, observe that in order to identify the $C_i$'s, the standard sparse vector technique needs to halt $\tilde{O}\left(n^{2/3}\right)$ times, which result in error at least $\tilde{O}\left(n^{1/3}\right)$ due to composition. This in turn means that this approach cannot identify the heavy-hitters generated by the $A_i$'s and the $B_i$'s.
\end{example}

\subsection{Related work}
Mostly related to our work are the works on the sparse vector technique and its variants~\cite{DNRRV09,PMW_HR10,ChenM15e,LyuSL17,BunSU16}. 
In particular, our analysis for algorithm \texttt{ThresholdMonitor} is based on techniques introduced by Hardt and Rothblum~\cite{PMW_HR10} in their analysis for the sparse vector technique. However, our analysis becomes more complicated in order to avoid halting the algorithm after the first $\top$ answer.

Another related work is the recent work of Feldman and Zrnic~\cite{FeldmanZ20}, who presented a refinement of standard composition theorems, in which they account separately for the privacy loss of different data elements, and delete elements from the data once they exhaust their privacy budget.

\section{Preliminaries}\label{sec:prelims}

\paragraph{Notation.}
Databases $S,S'$ are called {\em neighboring} if one is obtained from the other by adding or deleting one element, e.g., $S'=S\cup\{x'\}$.
For two random variables $Y,Z$ we write $X\approx_{(\eps,\delta)}Y$ to mean that for every event $F$ it holds that 
$\Pr[X\in F] \leq e^{\eps}\cdot\Pr[Y\in F]+\delta$, and $\Pr[Y\in F]\leq e^{\eps}\cdot\Pr[X\in F]+\delta$.

\paragraph{The Laplace mechanism.}
The most basic constructions of differentially private algorithms are via the Laplace mechanism as follows.

\begin{definition}[The Laplace Distribution]
A random variable has probability distribution $\Lap(b)$ if its probability density function is $f(x)=\frac{1}{2b}\exp\left(-\frac{|x|}{b}\right)$, where $x\in\R$.
\end{definition}

\begin{definition}[Sensitivity]
A function $f$ that maps databases to the reals has {\em sensitivity $\ell$} if for every neighboring $S,S'$, it holds that $|f(S)-f(S')|\leq \ell$.
\end{definition}

\begin{theorem}[The Laplace Mechanism \cite{DMNS06}]\label{thm:lap}
Let $f$ be a function that maps databases to the reals with sensitivity $\ell$. The mechanism $\AAA$ that on input $S$ adds noise with distribution $\Lap(\frac{\ell}{\eps})$ to the output of $f(S)$ preserves $(\eps,0)$-differential privacy.
\end{theorem}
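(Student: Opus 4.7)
The plan is to show the pointwise ratio of output densities on neighboring databases is bounded by $e^{\eps}$, and then integrate over any event $F$. Since the mechanism is $\AAA(S) = f(S) + Z$ with $Z \sim \Lap(\ell/\eps)$, the output distribution on input $S$ has density $p_S(y) = \frac{\eps}{2\ell} \exp(-\eps|y - f(S)|/\ell)$ on $\R$, and similarly $p_{S'}(y) = \frac{\eps}{2\ell}\exp(-\eps|y - f(S')|/\ell)$ for a neighbor $S'$.

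The first step is to bound the ratio $p_S(y)/p_{S'}(y)$ for every $y \in \R$. Writing
$$\frac{p_S(y)}{p_{S'}(y)} = \exp\!\left(\frac{\eps}{\ell}\bigl(|y - f(S')| - |y - f(S)|\bigr)\right),$$
the reverse triangle inequality gives $\bigl||y - f(S')| - |y - f(S)|\bigr| \leq |f(S) - f(S')|$, which by the sensitivity assumption is at most $\ell$. Hence $p_S(y)/p_{S'}(y) \leq \exp(\eps)$ pointwise, and by symmetry the same bound holds in the opposite direction.

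The second step is to promote the pointwise density bound to the event bound required by Definition~\ref{def:DP}. For any measurable $F \subseteq \R$,
$$\Pr[\AAA(S) \in F] = \int_F p_S(y)\, dy \leq e^{\eps} \int_F p_{S'}(y)\, dy = e^{\eps} \Pr[\AAA(S') \in F],$$
and again the symmetric inequality follows by swapping $S$ and $S'$. Taking $\delta = 0$ yields the claimed $(\eps, 0)$-differential privacy.

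There is no real obstacle here; the proof is entirely driven by the closed form of the Laplace density and the triangle inequality. The only subtlety worth flagging, should one wish to state the theorem for vector-valued $f$, is that one would replace $|\cdot|$ by $\ell_1$-norm and invoke sensitivity in $\ell_1$; since the statement here is for real-valued $f$, the one-dimensional triangle inequality suffices.
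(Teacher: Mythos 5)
Your proof is correct, and it is the standard argument. Note that the paper itself does not reprove this statement --- Theorem~\ref{thm:lap} appears in the Preliminaries section as a result cited from \cite{DMNS06} --- but the density-ratio bound via the reverse triangle inequality and sensitivity, followed by integration over the event $F$, is exactly the canonical proof given in that reference. Nothing to flag.
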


\paragraph{Differential privacy for interactive mechanisms.}
In this work we consider {\em interactive} mechanisms that answer queries presented by a data analyst (or an adversary). 
Informally, an interactive mechanism $M$ is said to be differentially private if for every adversary $A$ that interacts with $M$ it holds that the algorithm that simulates $A$ and $M$ interacting, and then outputs the transcript of the interaction, is differentially private. We give here a definition presented by~\cite{BunSU16}.

\begin{definition}[Differential Privacy for Interactive Mechanisms~\cite{DMNS06,BunSU16}]\label{def:dpInteractive}
Let $M$ be a mechanism that takes an input database and answers adaptively chosen queries (by an adversary $A$ from some family $Q$ of possible queries).
Mechanism $M$ is $(\epsilon,\delta)$-differentially private if for every adversary $A$ we have that $\adaptivealg{A}{M}$ (defined below) is $(\epsilon,\delta)$-differentially private.
\end{definition}

\begin{algorithm*}[!ht]
\caption{\bf $\boldsymbol{\adaptivealg{A}{M}}$}\label{alg:adaptivealg}
{\bf Input:} Database $S$.
\begin{enumerate}[topsep=-1pt,rightmargin=5pt,itemsep=-1pt]

\item The database $S$ is given to the mechanism $M$.

\item For $i = 1,2,\dots$

\begin{enumerate}[topsep=-3pt,rightmargin=5pt]%
\item The adversary $A$ chooses a query $q_i \in Q$.
\item The mechanism $M$ is given $q_i$ and returns $a_i$.
\item $a_i$ is given to $A$.
\end{enumerate}

\item When $M$ halts or $A$ halts, output $(q_1,a_1,q_2,a_2,\cdots)$.

\end{enumerate}
\end{algorithm*}

\paragraph{Algorithm \texttt{AboveThreshold}.}
Consider an adaptively chosen stream of low-sensitivity functions $f_1,f_2,\ldots$ which are given one by one to a data curator (holding a database $S$). Algorithm \texttt{AboveThreshold} of Dwork et al.~\cite{DNRRV09} is a differentially private mechanism (the curator runs) for identifying the first index $i$ such that the value of $f_i(S)$ is ``large''. (I.e. the queries that it gets is whether $f_i(S)$ is larger than a threshold $t$) %

\begin{algorithm}[H]
\caption{\bf \texttt{AboveThreshold}}\label{alg:AboveThreshold}
{\bf Input:} Database $S\in X^*$, privacy parameter $\eps$, threshold $t$, and an adaptively chosen stream of sensitivity-1 queries $f_i:X^*\rightarrow\R$.
\begin{enumerate}[topsep=-1pt,rightmargin=5pt,itemsep=-1pt]%
\item Let $\hat{t}\leftarrow t+\Lap(\frac{2}{\eps})$.
\item In each round $i$, when receiving a query $f_i$, do the following:
\begin{enumerate}[topsep=-3pt,rightmargin=5pt]%
\item Let $\hat{f_i}\leftarrow f_i(S)+\Lap(\frac{4}{\eps})$.
\item If $\hat{f_i}\geq\hat{t}$, then output $\top$ and halt.
\item Otherwise, output $\bot$ and proceed to the next iteration.
\end{enumerate}
\end{enumerate}
\end{algorithm}

Notice that the number of possible rounds is unbounded. Nevertheless, this process preserves differential privacy:

\begin{theorem}[\cite{DNRRV09,PMW_HR10}]\label{thm:aThresh}
Algorithm \texttt{AboveThreshold} is $(\eps,0)$-differentially private.
\end{theorem}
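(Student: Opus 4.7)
The plan is to follow the integration-based argument of Hardt and Rothblum. Fix any (possibly randomized) adversary $A$ and any two neighboring databases $S,S'$. Since the query $q_i$ issued by $A$ in round $i$ is a deterministic function of $A$'s internal randomness together with the preceding answers $a_1,\dots,a_{i-1}$, we may condition on the randomness of $A$ (and average at the end), so that the $q_i$'s are determined by the transcript prefix. Every finite output of $\adaptivealg{A}{\texttt{AboveThreshold}}$ then has the form $\vec a=(q_1,\bot,\dots,q_{k-1},\bot,q_k,\top)$ for some $k\ge 1$, and it suffices to prove the pointwise inequality $\Pr_S[\vec a]\le e^{\eps}\Pr_{S'}[\vec a]$ for every such cylinder; summing over $\vec a\in F$ then yields $(\eps,0)$-DP. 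The non-terminating all-$\bot$ outcome is handled as a monotone limit of $\bot$-prefix probabilities, each of which obeys the same bound by an easier version of the same argument (with the $\top$-factor dropped).

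Next, I would condition on the threshold noise. Writing $h$ for the density of $\Lap(2/\eps)$ and letting $\nu_i\sim\Lap(4/\eps)$ denote the independent per-query noises, the mutual independence of the $\nu_i$'s given the threshold noise $\rho$ yields
\[
\Pr_S[\vec a]\;=\;\int_{\R}h(\rho)\,\Big(\prod_{i<k}\Pr[\nu_i<t+\rho-f_i(S)]\Big)\,\Pr[\nu_k\ge t+\rho-f_k(S)]\,d\rho,
\]
and analogously for $S'$. The key step is a change of variables $\rho'=\rho+1$ in the $S$-integral, replacing $\rho$ by $\rho'-1$ throughout. Using $|f_i(S)-f_i(S')|\le 1$, one checks that $t+(\rho'-1)-f_i(S)\le t+\rho'-f_i(S')$ for every $i<k$, so each $\bot$-factor in the shifted $S$-integrand is pointwise at most the corresponding $S'$-factor, at no multiplicative cost. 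For the $\top$-factor one similarly checks $t+(\rho'-1)-f_k(S)\ge t+\rho'-2-f_k(S')$, so the shifted factor is at most $\Pr[\nu_k\ge t+\rho'-2-f_k(S')]$; lowering the threshold of a $\Lap(4/\eps)$ upper tail by $2$ multiplies the probability by at most $e^{\eps/2}$, which is the first paid factor. Finally, the density ratio of the threshold noise contributes $h(\rho'-1)/h(\rho')=e^{\eps(|\rho'|-|\rho'-1|)/2}\le e^{\eps/2}$ by the triangle inequality, which is the second paid factor.

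Multiplying the two $e^{\eps/2}$ factors yields $\Pr_S[\vec a]\le e^{\eps}\Pr_{S'}[\vec a]$. The main subtlety to verify is the integral representation of $\Pr_S[\vec a]$ in the adaptive setting: because the noises $\rho$ and $\{\nu_i\}$ are sampled independently and are never revealed to $A$ (only the coarse $\bot/\top$ answers are), conditioning on the answer sequence does not perturb their joint distribution, and the queries $f_i$ in the integrand are measurable functions of the already-fixed transcript prefix and can safely be pulled outside. The heart of the argument is the $\rho'=\rho+1$ trick, which simultaneously pays for the sensitivity of the threshold noise and of the single $\top$-query; this is precisely what avoids a naive union-over-rounds analysis whose privacy loss would scale with the unbounded number of rounds.
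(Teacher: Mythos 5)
Your proof is correct and it is the standard Hardt--Rothblum integral argument that the paper itself only cites: the paper states Theorem~\ref{thm:aThresh} as a black-box import from \cite{DNRRV09,PMW_HR10} and gives no proof. The decomposition into cylinder probabilities conditioned on the threshold noise, the change of variables $\rho'=\rho+1$, and the two $e^{\eps/2}$ factors (one from the $\Lap(2/\eps)$ density ratio, one from shifting the $\Lap(4/\eps)$ upper tail of the single $\top$-query by $2$) are precisely the ingredients of that proof, and your handling of adaptivity and of the all-$\bot$ limit is sound.
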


\section{Algorithm \texttt{ThresholdMonitor} and its Analysis}\label{sec:ThresholdMonitor}

In this section we present out main result -- Algorithm \texttt{ThresholdMonitor}, specified in Algorithm~\ref{alg:ThresholdMonitor}. The utility properties of the algorithm are straightforward (follow form a simple union bound on the magnitude of the noises throughout the execution).

\begin{theorem}\label{thm:TM_utility}
Consider an execution of \texttt{ThresholdMonitor} on a database $S$ and on a stream of $m$ adaptively chosen queries: $f_1,f_2,\dots,f_m$. Let $S_i$ denote the database $S$ as it is before answering the $i$th query. With probability at least $1-\beta$,
\begin{enumerate}
	\item For every $i$ such that $a_i=\top$ it holds that $f_i(S_i)\geq t-\tilde{O}\left(\frac{1}{\eps}\log\left(\frac{1}{\delta}\right)\log\left(\frac{m}{\beta}\right)\right)$.
		\item For every $i$ such that $a_i=\bot$ it holds that $f_i(S_i)\leq t+\tilde{O}\left(\frac{1}{\eps}\log\left(\frac{1}{\delta}\right)\log\left(\frac{m}{\beta}\right)\right)$.
\end{enumerate} 
\end{theorem}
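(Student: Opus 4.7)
I would prove \thmref{TM_utility} by a union bound on the magnitudes of the noises that Algorithm \texttt{ThresholdMonitor} samples during its execution. By the structure of the algorithm (as outlined in the analysis sketch), the decision $a_i$ in round $i$ depends only on the sign of $\hat f_i - \hat t$, where $\hat f_i = f_i(S_i) + w_i + v_i$ is a noisy estimate of $f_i(S_i)$ and $\hat t$ is (at worst) a noisy version of the threshold $t$. Here $w_i \sim \Lap(b_w)$ with $b_w = O(\log(1/\delta)/\eps)$ is the ``partitioning'' noise, and $v_i$ is a (possibly capped) $\Lap(b_v)$ with $b_v = O(1/\eps)$ is the ``masking'' noise used in the privacy proof. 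Once I bound $|w_i|$, $|v_i|$ and the threshold noise by $\tilde O(\frac{1}{\eps}\log(1/\delta)\log(m/\beta))$ uniformly over $i\in[m]$, both parts of the theorem follow immediately.

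\textbf{Step one: a uniform tail bound on every noise drawn.} Using the standard Laplace tail inequality $\Pr[|\Lap(b)| > b\ln(2/\gamma)] \leq \gamma$ with $\gamma = \beta/(Cm)$ for a suitable constant $C$, and taking a union bound over all $m$ rounds and the $O(1)$ noise variables drawn per round (together with the single threshold noise), I obtain an event $E$ of probability at least $1-\beta$ on which every $|w_i|$ and $|v_i|$ is at most $\tilde O\!\left(\tfrac{1}{\eps}\log(1/\delta)\log(m/\beta)\right)$. The capping applied to $v_i$ only decreases its magnitude, so the same bound still applies.

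\textbf{Step two: from the noise bound to the utility claim.} Condition on $E$. If $a_i=\top$ then by the algorithm $\hat f_i \geq \hat t$, and so
\[
f_i(S_i) \;=\; \hat f_i - w_i - v_i \;\geq\; \hat t - (|w_i|+|v_i|) \;\geq\; t - \tilde O\!\left(\tfrac{1}{\eps}\log(1/\delta)\log(m/\beta)\right),
\]
which gives part~1. Symmetrically, if $a_i=\bot$ then $\hat f_i < \hat t$, hence
\[
f_i(S_i) \;\leq\; \hat t + (|w_i|+|v_i|) \;\leq\; t + \tilde O\!\left(\tfrac{1}{\eps}\log(1/\delta)\log(m/\beta)\right),
\]
which gives part~2.

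\textbf{Main obstacle.} There is no deep obstacle here --- this is precisely the ``simple union bound'' the authors allude to in the paragraph preceding the theorem. The only care needed is bookkeeping: matching the noise scales $b_w,b_v$ to the ones used in the specification of \texttt{ThresholdMonitor} (so that the $\log(1/\delta)$ factor in the final bound is traced back to the scale of $w_i$), and checking that capping $v_i$ is harmless for the utility argument (it is, since capping can only shrink $|v_i|$). No argument about adaptivity, partitioning into $I_{\rm far}$ and $I_{\rm almost}$, or any privacy subtlety is needed for the utility part.
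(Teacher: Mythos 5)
Your proposal is correct and is precisely the argument the paper has in mind: the paper does not write out a proof of Theorem~\ref{thm:TM_utility} at all, saying only that the utility properties ``follow from a simple union bound on the magnitude of the noises throughout the execution,'' and your Step~1 / Step~2 decomposition is that union bound made explicit. Two small bookkeeping points you should fix when cleaning this up. First, unlike \texttt{AboveThreshold}, algorithm \texttt{ThresholdMonitor} compares $\hat f_i$ against the \emph{exact} threshold $t$ (there is no $\hat t \leftarrow t + \Noise$ step), so there is no threshold noise to include in the union bound; this only makes your argument cleaner. Second, your stated noise scales are off: in \texttt{ThresholdMonitor} one has $w_i \sim \Lap(10\Delta)$ with $\Delta = \frac{1}{\eps}\log\left(\frac{1}{\delta}\right)\log\left(\frac{1}{\eps}\log\frac{1}{\delta}\right)$ and $v_i \sim \Lap\left(\frac{1}{\eps}\log\frac{1}{\delta}\right)$, not $O(\log(1/\delta)/\eps)$ and $O(1/\eps)$ respectively. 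Running your Laplace tail bound with these scales gives $|w_i| + |\overline{v}_i| = O\left(\Delta \log(m/\beta)\right) = \tilde O\left(\frac{1}{\eps}\log\left(\frac{1}{\delta}\right)\log\left(\frac{m}{\beta}\right)\right)$ on an event of probability $1-\beta$, with the $\tilde O$ absorbing the extra $\log\left(\frac{1}{\eps}\log\frac{1}{\delta}\right)$ factor hidden inside $\Delta$; your observation that $|\overline{v}_i| \leq |v_i|$ (since $\overline{v}_i = \min\{v_i,\Delta\}$) is indeed correct and makes the capping harmless for utility.
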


We now present the privacy guarantees of the algorithm.
Note that, as with \texttt{AboveThreshold}, the number of possible rounds unbounded. Nevertheless, the algorithm is differentially private:

\begin{theorem}\label{thm:privacymain}
Algorithm \texttt{ThresholdMonitor} is $(\xi,3\delta)$-differentially private, for $\xi=\frac{75(k+1)\eps}{\log\frac{1}{\delta}}+25\eps$. In particular, for $k=\log\frac{1}{\delta}$ we have that algorithm \texttt{ThresholdMonitor} is $(O(\eps),3\delta)$-differentially private.
\end{theorem}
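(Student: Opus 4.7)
\medskip
\noindent\textbf{Proof proposal.} The plan is to fix neighboring databases $S' = S \cup \{x'\}$, couple the two executions of \texttt{ThresholdMonitor} on $S$ and $S'$ by sharing the adversary's randomness, the threshold noise on $\hat t$, and the per-round query noises, and single out the index $i^*$ at which $x'$ would be deleted from $S'$ (the $k$-th round $i$ with $a_i=\top$ and $f_i(x')=1$, or $\infty$ if none exists). For $i > i^*$ the two databases coincide, and for $i \le i^*$ with $f_i(x')=0$ the query evaluates identically on $S$ and $S'$; so the entire privacy gap is produced by the ``relevant'' set $R = \{i \le i^* : f_i(x')=1\}$, and the analysis reduces to bounding the divergence between the two transcripts restricted to $R$.

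Following the sketch in the introduction, I would rewrite each noisy query as $\hat f_i = f_i(S_i) + w_i + v_i$, where $w_i$ is a Laplace sample and $v_i$ is an independent Laplace sample truncated to an interval $[-M,M]$ with $M = \tilde O(\tfrac{1}{\eps}\log\tfrac{1}{\delta})$; the truncation costs at most $\delta/m$ mass per round and, over $m$ queries, is absorbed into one of the three $\delta$'s by a union bound. Condition on $\hat t$ and on the whole sequence $(w_i)_i$ and pre-classify each $i\in R$ as $I_{\rm far} = \{i\in R : f_i(S_i) + w_i + M < \hat t\}$ or $I_{\rm almost} = R \setminus I_{\rm far}$. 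The capping of $v_i$ now guarantees that for every $i \in I_{\rm far}$ we have $a_i = \bot$ deterministically in \emph{both} executions, so these rounds contribute no privacy loss whatsoever; this is precisely the mechanism by which we avoid the factor of $m$ and preserve the query-count-independent flavour of \texttt{AboveThreshold}. For $I_{\rm almost}$, I would first argue that $|I_{\rm almost}| \le T$ with probability at least $1-\delta$ (the second $\delta$) for some $T = O(k + \log(1/\delta))$: every such round has $f_i(S_i)+w_i$ within $O(M)$ of $\hat t$, so the probability over $v_i$ that $a_i = \top$ is at least a positive constant $c$; since $i^*$ is by definition reached after $k$ such successes, $|I_{\rm almost}|$ is stochastically dominated by a negative-binomial$(k,c)$ variable, whose tail gives the claimed $T$. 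Second, after fixing $(\hat t,(w_i)_i)$, each round in $I_{\rm almost}$ is (up to the capping) a Laplace mechanism on a sensitivity-$1$ statistic with scale $\Theta(\log(1/\delta)/\eps)$ and is therefore $O(\eps/\log(1/\delta))$-differentially private per round. Composing across $T$ such rounds yields the leading term $T \cdot \eps/\log(1/\delta) = O\!\bigl(\tfrac{(k+1)\eps}{\log(1/\delta)} + \eps\bigr)$, and an additional $O(\eps)$ absorbs the exposure of $\hat t$, the $w_i$'s, and the residual capping error (supplying the third $\delta$), reproducing the form of $\xi$ in the theorem.

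The main obstacle I anticipate is making the conditioning on $(\hat t,(w_i)_i)$ rigorous. Both $i^*$ and the partition $\{I_{\rm far},I_{\rm almost}\}$ are functions of the entire transcript, including the $v_i$'s and the adaptive queries, so freezing the $w_i$'s is not a priori legitimate. My plan is to (i) sample $\hat t$ and all $w_i$'s up front and treat them as public auxiliary randomness; (ii) recast the adversary as acting against the $v_i$'s only, observing the transcript as a function of the $v_i$'s and this public randomness; (iii) view the $I_{\rm far}$ rounds as deterministic post-processing under the capping, so that all privacy loss is localized to $I_{\rm almost}$; and (iv) apply an adaptive composition theorem for sensitivity-$1$ Laplace mechanisms across $I_{\rm almost}$. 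Pinning down the exact constants $75$ and $25$ requires a careful bookkeeping of the $\delta/m$ mass lost per round by truncation versus the per-round $(\eps_0,0)$-DP guarantee of the capped Laplace within its support, and this last technical check is the final ingredient.
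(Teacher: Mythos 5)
Your high-level plan — sample two independent noises $w_i,v_i$, use $w_i$ to partition rounds into "far" and "almost," argue that far rounds are free and that the number of almost rounds is small with high probability — does match the skeleton of the paper's proof. But the way you truncate $v_i$ creates a genuine gap. You truncate $v_i$ symmetrically to $[-M,M]$ and budget $\delta/m$ probability mass per round, absorbed by a union bound over $m$ queries. That reintroduces exactly the $m$-dependence the theorem is designed to avoid: the statement is $(\xi,3\delta)$ with no factor of $m$, and the algorithm answers an \emph{unbounded} stream. The paper caps $v_i$ only from above, $\overline v_i = \min\{v_i,\Delta\}$, which is a deterministic clip costing zero probability mass while still guaranteeing $\overline v_i\le\Delta$, so a round with $f_i(S_i)+f_i(x')+w_i<t-\Delta$ outputs $\bot$ with probability one on both $S$ and $S'$. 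The one-sided cap, however, is asymmetric: in the thin slab $t-f_i(x')-\Delta\le f_i(S_i)+w_i<t-\Delta$, the output is certainly $\bot$ on $S$ but can be $\top$ on $S'$. The paper introduces a third class, $I_{\rm special\text{-}almost}$, to handle this, and a separate high-probability event $E_2$ bounding its size; your two-way split $I_{\rm far}/I_{\rm almost}$ has no analogue, and with symmetric truncation there is no way to avoid the $m$ factor. (Also note: Algorithm~\ref{alg:ThresholdMonitor} has no threshold noise $\hat t$; the threshold is fixed and all noise is in $\hat f_i$.)

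Two further issues. First, your bound $|I_{\rm almost}| = O(k+\log(1/\delta))$ via negative-binomial domination with $k$ "successes" assumes $f_i(x')\in\{0,1\}$ and $i^*$ equals the $k$-th $\top$-round with $f_i(x')=1$; the algorithm handles linear queries $f_i:X\to[0,1]$, $i^*$ is when $\sum_{i\le i^*,a_i=\top}f_i(x')$ first reaches $k$, and $|I_{\rm almost}|$ can then be far larger than $k+\log(1/\delta)$. One must instead control the weighted sum $\sum_{i \text{ almost-top}} f_i(x')$ (the paper's Event $E_1$) and charge each such round $e^{f_i(x')\eps/\log(1/\delta)}$ rather than a flat $e^{\eps/\log(1/\delta)}$. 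Second, invoking a black-box "adaptive composition theorem for Laplace mechanisms" over $I_{\rm almost}$ does not straightforwardly apply: $I_{\rm almost}$ is itself a function of the shared $w_i$'s and the adaptively chosen queries, the per-round outputs are single bits, and the cap distorts the Laplace tails on the $\top$ side. The paper's Lemma~\ref{lem:main} instead bounds $\Pr[\BB(S)=\vec a\mid W]/\Pr[\BB(S')=\vec a\mid W]$ transcript-by-transcript, decomposing the product over $I_{\rm top},I_{\rm after},I_{\rm far},I_{\rm special\text{-}almost},I_{\rm upper\text{-}almost}$, and additionally needs the shift $W\mapsto W'$ (lowering $w_i$ by $f_{\vec a,i}(x')$ on $I_{\rm top}$) so that the two executions agree exactly on the $\top$-rounds, with the cost of the shift charged to the ratio $\Pr[W]/\Pr[W']$; you would need to supply something equivalent.
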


\begin{proof}
Let $\BB:X^n\rightarrow\{\bot,\top\}^*$ denote the function that simulates \texttt{ThresholdMonitor} interacting with a given adversary $A$ (cf. algorithm~\ref{alg:adaptivealg}) and returns the transcript. Without loss of generality, we may assume that our adversary is deterministic.\footnote{Say that for every deterministic adversary we have that $\Pr[\BB(S)\in F]\leq e^\eps\cdot\Pr[\BB(S')\in F]+\delta$. Now for a random adversary, let $r$ denote a possible fixture of its random coins. It holds that $\Pr[\BB(S)\in F]=\sum_r \Pr[r] \Pr[\BB(S)\in F | r]\leq \sum_r \Pr[r] ( e^\eps \cdot\Pr[\BB(S')\in F | r]+\delta)=e^\eps\cdot\Pr[\BB(S')\in F]+\delta$.}
Fix two neighboring databases $S$ and $S'=S\cup\{x'\}$ for some $x'\in X$, and let $S_i$ (or $S'_i$) denote the database $S$ (or $S'$) as it is before answering the $i$th query during the execution on $S$ (or on $S'$). 
Also let $c_i(\cdot)$ denote the counter $c(\cdot)$ as it is after the $i$th iteration.
Denote $\Delta=\frac{1}{\eps}\log\left(\frac{1}{\delta}\right)\log\left(\frac{1}{\eps}\log\frac{1}{\delta}\right)$.
We need  the following definition.

\begin{algorithm*}[t!]
\caption{\bf \texttt{ThresholdMonitor}}\label{alg:ThresholdMonitor}

{\bf Input:} Database $S\in X^*$, privacy parameters $\eps,\delta$, threshold $t$, parameter $k$, 
and an adaptively chosen stream of counting queries $f_i:X\rightarrow[0,1]$.

\begin{enumerate}[leftmargin=15pt,rightmargin=10pt,itemsep=1pt,topsep=3pt]

\item Initiate a counter $c(x)=0$ for every $x\in X$.

\item Denote $\Delta=\frac{1}{\eps}\log\left(\frac{1}{\delta}\right)\log\left(\frac{1}{\eps}\log\frac{1}{\delta}\right)$.

\item In each round $i$, when receiving a query $f_i$, do the following:

	\begin{enumerate}[leftmargin=15pt,itemsep=1.5pt,topsep=1pt]
		\item Let $w_i\leftarrow\Lap\left(10\Delta \right)$ and let $v_i\leftarrow\Lap(\frac{1}{\eps}\log\frac{1}{\delta})$.
		
		\item Denote $\overline{v}_i=\min\left\{v_i,\Delta \right\}$.
		
		\item Let $\hat{f}_i = f_i(S) + w_i + \overline{v}_i$.

		\item If $\hat{f}_i< t$, then output $a_i=\bot$.
		
		\item Otherwise:
		
		\begin{itemize}[leftmargin=20pt,itemsep=1.5pt,topsep=1pt]
			\item Output $a_i=\top$.
			\item For every $x\in X$ set $c(x)\leftarrow c(x)+f_i(x)$.
			\item\label{step:delete} Delete from $S$ every element $x$ such that $c(x)\geq k$.
		\end{itemize}
		
		\item Proceed to the next iteration.

\end{enumerate}	
\end{enumerate}
\end{algorithm*}

\begin{definition}
We say that an {\em almost-top} occurs at time step $i$ if $a_i=\bot$ and $c_i(x')<k$ and 
$$f_i(S_i)+w_i\geq t - 2\Delta.$$
\end{definition}

Intuitively, almost-tops are time steps in which the noisy value of the query $\hat{f}_i$ {\em almost} crossed the threshold $t$, but did not, before the iteration in which $c(x')\geq k$. (This intuitive explanation is somewhat inaccurate because the definition of almost-tops only takes into account the noise $w_i$, without the noise $\overline{v}_i$.) We also identify the following subset of almost-tops, which we call {\em special-almost-tops}. In the introduction we mentioned that the vast majority of the time steps $i$ for which $f_i(S_i)+w_i\ll t$ come ``for free'' in terms of the privacy analysis. Intuitively, special-almost-tops represent the small portion of these time steps that do not come for free.

\begin{definition}
We say that a {\em special-almost-top} occurs at time step $i$ if $a_i=\bot$ and $c_i(x')<k$ and 
$$t-f_i(x')-\Delta \leq f_i(S_i)+w_i < t - \Delta.$$
\end{definition}

We define the following events.

\begin{center}
\noindent\fboxother{
\parbox{.9\columnwidth}{

\begin{align*}
\text{\bf Event ${\boldsymbol{E_1}}$:} &\qquad\qquad \sum_{i:\text{ almost-top}}f_i(x') \leq 15(k+1)+5{\cdot}\log\frac{1}{\delta}\\
\text{\bf Event ${\boldsymbol{E_2}}$:} &\qquad\qquad \left|\left\{i:\text{ special-almost-top}\right\}\right| \leq 30(k+1)+10{\cdot}\log\frac{1}{\delta}\\[1em]
\text{\bf Event ${\boldsymbol{E_3}}$:} &\qquad\qquad E_3=E_1\wedge E_2
\end{align*}
}}
\end{center}

\begin{lemma}\label{lem:event}
Consider the execution of $\BB(S)$. Event $E_3$ occurs with probability at least $1-3\delta$.
\end{lemma}

The formal proof of Lemma~\ref{lem:event} is given in Appendix~\ref{sec:ProofOfLemma}. Here we only sketch the argument.

\begin{proof}[Proof sketch]
Let $i^*$ denote the first time step such that $c_{i^*}(x')\geq k$.
Recall that each $w_i$ is sampled from $\Lap(10\Delta)$. Hence, in every time step $i$, the probability that $f_i(S_i)+w_i\geq t - 2\Delta$ is not much larger than the probability that $f_i(S_i)+w_i\geq t + 2\Delta$, in which case $a_i=\top$ with high probability (because $v_i$ is sampled from $\Lap(b)$ for $b\ll\Delta$). This means that the probability of an almost-top occurring in some time step $i$ is not much larger then the probability of a top occurring in this time step. Hence, it is very unlikely that $\sum_{i:\;\text{ almost-top}}f_i(x')$ would be much bigger than $\sum_{i\leq i^*:\; a_i=\top}f_i(x')$, which is bounded by $k+1$ by definition of $i^*$. This argument can be formalized to show that Event $E_1$ occurs with high probability.

For Event $E_2$, recall that for a time step $i$ to be a special-almost-top, the noise $w_i$ must satisfy $w_i \in \left[t-f_i(x')-\Delta, t-\Delta\right]$, which happens with probability at most $f_i(x')$, even conditioned on this time step being a (non-special) almost-top. (This is true by the properties of the Laplace distribution.) 
Therefore, the expected number of special-almost-tops is at most $\sum_{i:\text{ almost-top}}f_i(x')$, which is at most $15(k+1)+5{\cdot}\log\frac{1}{\delta}$ with high probability. Using standard concentration bounds, this shows that Event $E_2$ occurs with high probability.
\end{proof}

\medskip

We continue with the proof of Theorem~\ref{thm:privacymain}. 
Observe that, since we assume that our adversary is deterministic, every query $f_i$ is completely determined by the values of the previous answers $a_1,\dots,a_{i-1}$. We use the following notation.
 
\begin{notation}
Given a possible outcome vector $\vec{a}\in\{\bot,\top\}^*$, we write $f_{\vec{a},i}$ to denote the query chosen by the adversary after seeing the first $i-1$ answers in $\vec{a}$.
\end{notation}

An important thing to note is that  the outcome vector $\vec{a}$ and the noise vector $W=(w_1,w_2,\dots)$ completely determine whether $E_3$ occurs. 
The following lemma is a key in our analysis.

\begin{lemma}\label{lem:main}
The following holds for every outcome vector $\vec{a}\in\{\bot,\top\}^*$ and noise vector $W=(w_1,w_2,\dots)$ for which event $E_3$ occurs. 
Let $i^*\,{=}\,i^*(\vec{a})$ denote the first time step such that $c_{i^*}(x')\geq k$.
Let $I_{\rm top}=I_{\rm top}(\vec{a})$ denote the set of all time steps $i\leq i^*$ such that $a_i=\top$.
Let $W'$ denote a vector identical to $W$, except that every coordinate $i \in I_{\rm top}$ is decreased by $f_{\vec{a},i}(x')$. That is, 
$w'_{i}=w_{i}-f_{\vec{a},i}(x')$ for every $i\in I_{\rm top}$, and $w'_i=w_i$ for every other index $i$. Then,
$$
\Pr\left[\BB(S')=\vec{a} \left|  W' \right.\right]
\;\;\leq\;\;
\Pr\left[\BB(S)=\vec{a} \left|  W \right.\right]
\;\;\leq\;\;
\exp\left(\frac{75(k+1)\eps}{\log\frac{1}{\delta}}+25\eps\right)\cdot \Pr\left[\BB(S')=\vec{a} \left|  W \right.\right].
$$
(Note that since we fixed $W$ and $W'$, these probabilities are only over the noises $v_i$.)
\end{lemma}

We first show how Theorem~\ref{thm:privacymain} follows from Lemma~\ref{lem:main} (the proof of Lemma~\ref{lem:main} is given afterwards). For simplicity we will assume that the noise vector $W$ is discrete; the analysis could be extended to handle continues vectors $W$.\footnote{In fact, focusing on discrete vectors $W$ is without loss of generality, since we may round each coordinate of $W$, and each value $f_i(x)$, to the nearest multiplication of $1/N$, for some $N\geq n$. This changes the value of every query $f_i(S)$ by at most 1, which has basically no effect on the utility of the algorithm (since the noise we add to every $f_i(S)$ has variance much larger than 1).}
Let $F\subseteq\{\bot,\top\}^*$ be a set of possible outcomes. Denote $\xi= \frac{75(k+1)\eps}{\log\frac{1}{\delta}}+25\eps$. We have that

\begin{align}
&\Pr[\BB(S)\in F] \leq 3\delta +  \Pr[\BB(S)\in F \wedge E_3]%
=3\delta + \sum_{\vec{a}\in F} \Pr[\BB(S)=\vec{a} \wedge E_3]\nonumber\\
&\quad=3\delta + \sum_{\vec{a}\in F} \sum_{W} \Pr[W] \cdot \Pr[\BB(S)=\vec{a} \wedge E_3 \mid W]%
=3\delta + \sum_{\vec{a}\in F} \sum_{\substack{W:\\ E_3 \text{ occurs}\\ \text{for } \vec{a},W}} \Pr[W] \cdot \Pr[\BB(S)=\vec{a} \mid W]\nonumber\\
&\quad\leq3\delta + \sum_{\vec{a}\in F} \sum_{\substack{W:\\ E_3 \text{ occurs}\\ \text{for } \vec{a},W}} \Pr[W] \cdot e^{\xi}\cdot\Pr[\BB(S')=\vec{a} \mid W]\label{eq:explain0}\\
&\quad\leq3\delta + \sum_{\vec{a}\in F} \sum_{W} \Pr[W] \cdot  e^{\xi}\cdot\Pr[\BB(S')=\vec{a} \mid W]%
=3\delta+ e^{\xi}\cdot\Pr[\BB(S')\in F],\nonumber
\end{align}
where~(\ref{eq:explain0}) follows from Lemma~\ref{lem:main}.
The other direction is similar, except that we also need to argue about replacing $W$ with $W'$. Recall that each $W'$ is obtained from $W$ by decreasing $w_i$ by $f_{\vec{a},i}(x')$ for every coordinate $i\in I_{\rm top}$. Recall that by Step~\ref{step:delete} of the algorithm (and by definition of $I_{\rm top}$) we have that $\sum_{i\in I_{\rm top}} f_{\vec{a},i}(x') \leq k+1$. Also recall that every $w_i$ is sampled from $\Lap(10\Delta)$ for $\Delta=\frac{1}{\eps}\log\left(\frac{1}{\delta}\right)\log\left(\frac{1}{\eps}\log\frac{1}{\delta}\right)$. Therefore, for every such vector $W$ and its corresponding vector $W'$ we have
\begin{align}
\Pr[W]&=\left(\prod_{i\in I_{\rm top}}\Pr[w_i]\right)\cdot\left(\prod_{i\notin I_{\rm top}}\Pr[w_i]\right)%
\geq \left(\prod_{i\in I_{\rm top}}e^{-f_{\vec{a},i}(x')/(10\Delta)}\cdot\Pr[w'_i]\right)\cdot\left(\prod_{i\notin I_{\rm top}}\Pr[w_i]\right)\nonumber\\
&= \exp\left(-\frac{1}{10\Delta}\sum_{i\in I_{\rm top}} f_{\vec{a},i}(x')\right)\cdot \Pr[W']%
\geq \exp\left(-\frac{k+1}{10\Delta}\right)\cdot\Pr[W']%
\geq e^{-\xi}\cdot\Pr[W'].\label{eq:WW}
\end{align}
Hence,
\begin{align}
\Pr[\BB(S)\in F] &\geq \Pr[\BB(S)\in F \wedge E_3]%
= \sum_{\vec{a}\in F} \Pr[\BB(S)=\vec{a} \wedge E_3 ]\nonumber\\
&= \sum_{\vec{a}\in F} \sum_{W} \Pr[W]\cdot \Pr[\BB(S)=\vec{a} \wedge E_3 \mid W]%
= \sum_{\vec{a}\in F} \sum_{\substack{W:\\ E_3 \text{ occurs}\\ \text{for } \vec{a},W}} \Pr[W]\cdot \Pr[\BB(S)=\vec{a} \mid W]\nonumber\\
&\geq \sum_{\vec{a}\in F} \sum_{\substack{W:\\ E_3 \text{ occurs}\\ \text{for } \vec{a},W}} e^{-\xi}\cdot\Pr[W']\cdot \Pr[\BB(S')=\vec{a} \mid W']\label{eq:explain1}\\
&= \sum_{\vec{a}\in F} \sum_{\substack{W:\\ E_3 \text{ occurs}\\ \text{for } \vec{a},W'}} e^{-\xi}\cdot\Pr[W']\cdot \Pr[\BB(S')=\vec{a} \mid W']\label{eq:explain2}\\
&= e^{-\xi}\cdot\sum_{\vec{a}\in F} \sum_{W} \Pr[W']\cdot \Pr[\BB(S')=\vec{a} \wedge E_3 \mid W']%
=e^{-\xi}\cdot\sum_{\vec{a}\in F} \Pr[\BB(S')=\vec{a} \wedge E_3 ]\nonumber\\
&=e^{-\xi}\cdot\Pr[\BB(S')\in F \wedge E_3]\nonumber\\
&\geq e^{-\xi}\cdot \Pr[\BB(S')\in F \wedge E_3]+e^{-\xi}\cdot\Pr[\BB(S')\in F \wedge \overline{E_3}]-e^{-\xi}\cdot3\delta\nonumber\\
&= e^{-\xi}\cdot \Pr[\BB(S')\in F]-e^{-\xi}\cdot3\delta,\nonumber
\end{align}
where~(\ref{eq:explain1}) follows from Lemma~\ref{lem:main} and from Inequality~(\ref{eq:WW}), and where~(\ref{eq:explain2}) holds because $E_3$ occurs for $\vec{a},W$ if and only if it occurs for $\vec{a},W'$. Specifically, $E_3$ is determined from $\vec{a}$ (the set $I_{\rm top}$ is also determined from $\vec{a}$) and from $\{w_i : i\notin I_{\rm top}\}$, regardless of the value of the $w_{i}$'s in $I_{\rm top}$.
This completes the proof of Theorem~\ref{thm:privacymain} using Lemma~\ref{lem:main}. 
\end{proof}

It remains to prove Lemma~\ref{lem:main}.

\begin{proof}[Proof of Lemma~\ref{lem:main}]
Fix an outcome vector $\vec{a}$ and a vector of noises $W=(w_1,w_2,\dots)$ for which Event $E_3$ occurs. 
Let $i^*=i^*(\vec{a})$ denote the first time step such that $c_{i^*}(x')\geq k$. This is the time step in which $x'$ is deleted from the data. Denote $\Delta=\frac{1}{\eps}\log\left(\frac{1}{\delta}\right)\log\left(\frac{1}{\eps}\log\frac{1}{\delta}\right)$. 
Consider the following partition of the time steps (this partition depends on $\vec{a}$ and $W$):

\begin{center}
  \begin{tabular}{ l | l }
    \hline\hline\\[-0.8em]
    $I_{\rm top}$ & All indices $i\leq i^*$ such that $a_i=\top$ \\[0.4em] \hline \\[-0.8em]
		$I_{\rm after}$ & All indices $i>i^*$ \\[0.4em] \hline \\[-0.8em]
		$I_{\rm far}$ & All indices $i\leq i^*$ such that $a_i=\bot$ and $f_{\vec{a},i}(S)+w_i< t - f_{\vec{a},i}(x') - \Delta$ \\[0.4em] \hline \\[-0.8em]
		$I_{\rm special\text{-}almost}$ & All indices $i\leq i^*$ such that $a_i=\bot$ and $t-f_{\vec{a},i}(x')-\Delta \leq f_{\vec{a},i}(S_i)+w_i < t - \Delta$ \\[0.4em] \hline \\[-0.8em]
		$I_{\rm upper\text{-}almost}$ & All indices $i\leq i^*$ such that $a_i=\bot$ and $f_{\vec{a},i}(S)+w_i\geq t - \Delta$ \\[0.4em]
    \hline\hline
  \end{tabular}
\end{center}

Recall that we write $W'$ to denote a vector identical to $W$, except that coordinates $i\in I_{\rm top}$ are decreased by $f_{\vec{a},i}(x')$. 
For convenience, let $\BB=\BB(S)$ and $\BB'=\BB(S')$. 
We may now decompose

\begin{align}
\Pr\left[\BB(S)=\vec{a} \left|  W  \right.\right] &= \Pr\left[(\BB_1,\BB_2,\dots)=(a_1,a_2,\dots) \left|  W  \right.\right] \nonumber\\
&= \prod_{i=1,2,\dots} \Pr\left[\BB_i=a_i \left| (\BB_1,\dots,\BB_{i-1})=(a_1,\dots,a_{i-1}) \wedge  W  \right.\right]\nonumber\\
&= \prod_{i\in I_{\rm top}} \Pr\left[\BB_i=a_i \left| (\BB_1,\dots,\BB_{i-1})=(a_1,\dots,a_{i-1}) \wedge  W  \right.\right]\nonumber\\
&\qquad\cdot \prod_{i\in I_{\rm after}} \Pr\left[\BB_i=a_i \left| (\BB_1,\dots,\BB_{i-1})=(a_1,\dots,a_{i-1}) \wedge  W  \right.\right]\nonumber\\
&\qquad\cdot \prod_{i\in I_{\rm far}} \Pr\left[\BB_i=a_i \left| (\BB_1,\dots,\BB_{i-1})=(a_1,\dots,a_{i-1}) \wedge  W  \right.\right]\nonumber\\
&\qquad\cdot\prod_{i\in I_{\rm special\text{-}almost}} \Pr\left[\BB_i=a_i \left| (\BB_1,\dots,\BB_{i-1})=(a_1,\dots,a_{i-1}) \wedge  W  \right.\right]\nonumber\\
&\qquad\cdot\prod_{i\in I_{\rm upper\text{-}almost}} \Pr\left[\BB_i=a_i \left| (\BB_1,\dots,\BB_{i-1})=(a_1,\dots,a_{i-1}) \wedge  W  \right.\right]
\label{eq:decompose1}
\end{align}

We next analyze each of the expressions in Equality~(\ref{eq:decompose1}) separately, and relate them to the corresponding expressions with $S'$ instead of $S$. Recall that we need to bound these expressions both from above and from below.

\paragraph{Analysis for ${\boldsymbol{i\in I_{\rm top}}}$.} 

Recall that $S'=S\cup\{x'\}$, and that $f_{\vec{a},i}(S'_{i})= f_{\vec{a},i}(S_{i})+f_{\vec{a},i}(x')$ for every $i\in I_{\rm top}$. Therefore, for every $i\in I_{\rm top}$ we have

\begin{align*}
&\Pr\left[\BB_{i}=a_{i} \left| (\BB_1,\dots,\BB_{i-1})=(a_1,\dots,a_{i-1}) \wedge  W  \right.\right]\\
&\qquad\qquad=\Pr\left[\BB_{i}=\top \left| (\BB_1,\dots,\BB_{i-1})=(a_1,\dots,a_{i-1}) \wedge  W  \right.\right]\\
&\qquad\qquad=\Pr_{v_i}\left[ f_{\vec{a},i}(S_{i}) + w_{i} + \overline{v}_{i}  \geq t \right]\\
&\qquad\qquad=\Pr_{v_i}\left[ f_{\vec{a},i}(S'_{i}) - f_{\vec{a},i}(x') + w_{i} + \overline{v}_{i} \geq t \right]\\
&\qquad\qquad=\Pr\left[\BB'_{i}=a_{i} \left| (\BB'_1,\dots,\BB'_{i-1})=(a_1,\dots,a_{i-1}) \wedge  W'  \right.\right],
\end{align*}
Observe that one of the directions holds also without replacing $W$ with $W'$. Specifically,
\begin{align*}
&\Pr\left[\BB_{i}=a_{i} \left| (\BB_1,\dots,\BB_{i-1})=(a_1,\dots,a_{i-1}) \wedge  W  \right.\right]\\
&\qquad\qquad=\Pr\left[\BB_{i}=\top \left| (\BB_1,\dots,\BB_{i-1})=(a_1,\dots,a_{i-1}) \wedge  W  \right.\right]\\
&\qquad\qquad=\Pr_{v_i}\left[ f_{\vec{a},i}(S_{i}) + w_{i} + \overline{v}_{i}  \geq t \right]\\
&\qquad\qquad\leq\Pr_{v_i}\left[ f_{\vec{a},i}(S'_{i}) + w_{i} + \overline{v}_{i} \geq t \right]\\
&\qquad\qquad=\Pr\left[\BB'_{i}=a_{i} \left| (\BB'_1,\dots,\BB'_{i-1})=(a_1,\dots,a_{i-1}) \wedge  W  \right.\right].
\end{align*}

\paragraph{Analysis for ${\boldsymbol{i\in I_{\rm after}}}$.} Recall that during the execution on $S'$ we have that $x'$ is deleted from the input database during time $i=i^*$. Therefore, for every $i>i^*$ we have that $S_i=S'_i$. Hence, 

\begin{align*}
&\prod_{i\in I_{\rm after}} \Pr\left[\BB_i=a_i \left| (\BB_1,\dots,\BB_{i-1})=(a_1,\dots,a_{i-1}) \wedge  W  \right.\right]\\
&\qquad\qquad=\left(\prod_{\substack{i\in I_{\rm after}\\a_i=\bot}} \Pr_{v_i}\left[ f_{\vec{a},i}(S_i) + w_i + \overline{v}_i  < t \right]\right) \cdot \left(\prod_{\substack{i\in I_{\rm after}\\a_i=\top}} \Pr_{v_i}\left[ f_{\vec{a},i}(S_i) + w_i + \overline{v}_i  \geq t \right]\right)\\
&\qquad\qquad=\left(\prod_{\substack{i\in I_{\rm after}\\a_i=\bot}} \Pr_{v_i}\left[ f_{\vec{a},i}(S'_i) + w_i + \overline{v}_i  < t \right]\right) \cdot \left(\prod_{\substack{i\in I_{\rm after}\\a_i=\top}} \Pr_{v_i}\left[ f_{\vec{a},i}(S'_i) + w_i + \overline{v}_i  \geq t \right]\right)\\
&\qquad\qquad=\prod_{i\in I_{\rm after}} \Pr\left[\BB'_i=a_i \left| (\BB'_1,\dots,\BB'_{i-1})=(a_1,\dots,a_{i-1}) \wedge  W  \right.\right]\\
&\qquad\qquad=\prod_{i\in I_{\rm after}} \Pr\left[\BB'_i=a_i \left| (\BB'_1,\dots,\BB'_{i-1})=(a_1,\dots,a_{i-1}) \wedge  W' \right.\right],
\end{align*}
where the last equality is because once we fix $(a_1,\dots,a_{i-1})$ and $w_i$ then $\BB'_i$ is independent of $w_{j}$ for $j\neq i$ (it depends only on $v_i$).

\paragraph{Analysis for ${\boldsymbol{i\in I_{\rm far}}}$.} 

For every $i\in I_{\rm far}$ we have that 
$f_{\vec{a},i}(S_i)+w_i < t - f_{\vec{a},i}(x') - \Delta$. 
Therefore, we also have that
$$f_{\vec{a},i}(S'_i)+w_i = f_{\vec{a},i}(S_i)+f_{\vec{a},i}(x')+w_i  < t-\Delta.
$$
Recall that, by definition, $\overline{v}_i$ is always at most $\Delta$. Hence,

\begin{align*}
&\prod_{i\in I_{\rm far}} \Pr\left[\BB_i=a_i \left| (\BB_1,\dots,\BB_{i-1})=(a_1,\dots,a_{i-1}) \wedge  W  \right.\right]\\
&\qquad\qquad=\prod_{i\in I_{\rm far}} \Pr\left[\BB_i=\bot \left| (\BB_1,\dots,\BB_{i-1})=(a_1,\dots,a_{i-1}) \wedge  W  \right.\right]\\
&\qquad\qquad=\prod_{i\in I_{\rm far}} \Pr_{v_i}\left[ f_{\vec{a},i}(S_i) + w_i + \overline{v}_i  < t \right]\\
&\qquad\qquad=1\\
&\qquad\qquad=\prod_{i\in I_{\rm far}} \Pr_{v_i}\left[ f_{\vec{a},i}(S'_i) + w_i + \overline{v}_i  < t \right]\\
&\qquad\qquad=\prod_{i\in I_{\rm far}} \Pr\left[\BB'_i=a_i \left| (\BB'_1,\dots,\BB'_{i-1})=(a_1,\dots,a_{i-1}) \wedge  W  \right.\right]\\
&\qquad\qquad=\prod_{i\in I_{\rm far}} \Pr\left[\BB'_i=a_i \left| (\BB'_1,\dots,\BB'_{i-1})=(a_1,\dots,a_{i-1}) \wedge  W'  \right.\right].
\end{align*}

\paragraph{Analysis for ${\boldsymbol{i\in I_{\rm special\text{-}almost}}}$.} 
 
First observe that 

\begin{align*}
&\prod_{i\in I_{\rm special\text{-}almost}} \Pr\left[\BB_i=a_i \left| (\BB_1,\dots,\BB_{i-1})=(a_1,\dots,a_{i-1}) \wedge  W \right.\right]\\
&\qquad\qquad= \prod_{i\in I_{\rm special\text{-}almost}}\Pr\left[\BB_i=\bot \left| (\BB_1,\dots,\BB_{i-1})=(a_1,\dots,a_{i-1}) \wedge  W \right.\right]\\
&\qquad\qquad= \prod_{i\in I_{\rm special\text{-}almost}}\Pr_{v_i}\left[ f_{\vec{a},i}(S_i) + w_i + \overline{v}_i < t \right]\\
&\qquad\qquad\geq \prod_{i\in I_{\rm special\text{-}almost}}\Pr_{v_i}\left[ f_{\vec{a},i}(S'_i) + w_i + \overline{v}_i < t \right]\\
&\qquad\qquad=\prod_{i\in I_{\rm special\text{-}almost}}\Pr\left[\BB'_i=a_i \left| (\BB'_1,\dots,\BB'_{i-1})=(a_1,\dots,a_{i-1}) \wedge  W \right.\right]\\
&\qquad\qquad=\prod_{i\in I_{\rm special\text{-}almost}}\Pr\left[\BB'_i=a_i \left| (\BB'_1,\dots,\BB'_{i-1})=(a_1,\dots,a_{i-1}) \wedge  W' \right.\right].
\end{align*}

For the other direction, fix $i\in I_{\rm special\text{-}almost}$. Recall that $v_i$ is sampled from $\Lap(\frac{1}{\eps}\log\frac{1}{\delta})$, and that we denote $\Delta=\frac{1}{\eps}\log\left(\frac{1}{\delta}\right)\log\left(\frac{1}{\eps}\log\frac{1}{\delta}\right)$. Using the fact that $e^{\gamma}\cdot\left(1-\frac{\gamma}{2}\right)\geq1$ for every $0\leq\gamma\leq1$ we have that

\begin{align}
&\Pr\left[\BB_i=a_i \left| (\BB_1,\dots,\BB_{i-1})=(a_1,\dots,a_{i-1}) \wedge  W \right.\right]\nonumber\\
&\qquad\qquad\leq1\nonumber\\
&\qquad\qquad\leq e^{\eps/\log\frac{1}{\delta}}\cdot\left(1-\frac{\eps}{2\log\frac{1}{\delta}}\right)\nonumber\\
&\qquad\qquad\leq e^{\eps/\log\frac{1}{\delta}}\cdot \Pr_{v_i}\left[ v_i < \Delta \right]\nonumber\\
&\qquad\qquad\leq e^{\eps/\log\frac{1}{\delta}}\cdot \Pr_{v_i}\left[ f_{\vec{a},i}(S_i) + w_i + v_i < t \right]\label{eq:explain3}\\
&\qquad\qquad\leq e^{\eps/\log\frac{1}{\delta}}\cdot e^{\eps/\log\frac{1}{\delta}}\cdot \Pr_{v_i}\left[ f_{\vec{a},i}(S'_i) + w_i + v_i < t \right]\nonumber\\
&\qquad\qquad\leq e^{2\eps/\log\frac{1}{\delta}}\cdot \Pr_{v_i}\left[ f_{\vec{a},i}(S'_i) + w_i + \overline{v}_i < t \right]\nonumber\\
&\qquad\qquad=e^{2\eps/\log\frac{1}{\delta}}\cdot \Pr\left[\BB'_i=a_i \left| (\BB'_1,\dots,\BB'_{i-1})=(a_1,\dots,a_{i-1}) \wedge  W \right.\right],\nonumber
\end{align}
where (\ref{eq:explain3}) follows since for $i\in I_{\rm special\text{-}almost}$ we have that 
$f_{\vec{a},i}(S_i)+w_i +\Delta< t$, and hence, if $v_i < \Delta$ then $f_{\vec{a},i}(S_i) + w_i + v_i<t$.
Now, by Event $E_2$ we have that $\left| I_{\rm special\text{-}almost} \right|\leq 30(k+1) + 10\log\frac{1}{\delta}$, and therefore,

\begin{align*}
&\prod_{i\in I_{\rm special\text{-}almost}} \Pr\left[\BB_i=a_i \left| (\BB_1,\dots,\BB_{i-1})=(a_1,\dots,a_{i-1}) \wedge  W \right.\right]\\
&\qquad\qquad\leq \prod_{i\in I_{\rm special\text{-}almost}} e^{2\eps/\log\frac{1}{\delta}}\cdot \Pr\left[\BB'_i=a_i \left| (\BB'_1,\dots,\BB'_{i-1})=(a_1,\dots,a_{i-1}) \wedge  W \right.\right]\\
&\qquad\qquad\leq \exp\left(\frac{60(k+1)\eps}{\log\frac{1}{\delta}}+20\eps\right)\cdot\prod_{i\in I_{\rm special\text{-}almost}} \Pr\left[\BB'_i=a_i \left| (\BB'_1,\dots,\BB'_{i-1})=(a_1,\dots,a_{i-1}) \wedge  W \right.\right].
\end{align*}

\paragraph{Analysis for ${\boldsymbol{i\in I_{\rm upper\text{-}almost}}}$.}

As in the analysis for $i\in I_{\rm special\text{-}almost}$, we have that

\begin{align*}
&\prod_{i\in I_{\rm upper\text{-}almost}} \Pr\left[\BB_i=a_i \left| (\BB_1,\dots,\BB_{i-1})=(a_1,\dots,a_{i-1}) \wedge  W \right.\right]\\
&\qquad\qquad\geq\prod_{i\in I_{\rm upper\text{-}almost}}\Pr\left[\BB'_i=a_i \left| (\BB'_1,\dots,\BB'_{i-1})=(a_1,\dots,a_{i-1}) \wedge  W \right.\right]\\
&\qquad\qquad=\prod_{i\in I_{\rm upper\text{-}almost}}\Pr\left[\BB'_i=a_i \left| (\BB'_1,\dots,\BB'_{i-1})=(a_1,\dots,a_{i-1}) \wedge  W' \right.\right].
\end{align*}

For the other direction,

\begin{align}
&\prod_{i\in I_{\rm upper\text{-}almost}} \Pr\left[\BB_i=a_i \left| (\BB_1,\dots,\BB_{i-1})=(a_1,\dots,a_{i-1}) \wedge  W \right.\right]\nonumber\\
&\qquad= \prod_{i\in I_{\rm upper\text{-}almost}} \Pr\left[\BB_i=\bot \left| (\BB_1,\dots,\BB_{i-1})=(a_1,\dots,a_{i-1}) \wedge  W \right.\right]\nonumber\\
&\qquad= \prod_{i\in I_{\rm upper\text{-}almost}} \Pr_{v_i}\left[ f_{\vec{a},i}(S_i) + w_i + \overline{v}_i < t \right]\nonumber\\
&\qquad=\prod_{i\in I_{\rm upper\text{-}almost}} \Pr\left[ f_{\vec{a},i}(S_i) + w_i + v_i < t \right]\label{eq:explain4}\\
&\qquad\leq \prod_{i\in I_{\rm upper\text{-}almost}} e^{f_{\vec{a},i}(x')\cdot\eps/\log\frac{1}{\delta}}\cdot \Pr\left[ f_{\vec{a},i}(S_i) + f_{\vec{a},i}(x') + w_i + v_i < t \right]\nonumber\\
&\qquad= \prod_{i\in I_{\rm upper\text{-}almost}} e^{f_{\vec{a},i}(x')\cdot\eps/\log\frac{1}{\delta}}\cdot \Pr\left[ f_{\vec{a},i}(S'_i) + w_i + v_i < t \right]\nonumber\\
&\qquad\leq \prod_{i\in I_{\rm upper\text{-}almost}} e^{f_{\vec{a},i}(x')\cdot\eps/\log\frac{1}{\delta}}\cdot \Pr\left[ f_{\vec{a},i}(S'_i) + w_i + \overline{v}_i < t \right]\nonumber\\
&\qquad=\prod_{i\in I_{\rm upper\text{-}almost}} e^{f_{\vec{a},i}(x')\cdot\eps/\log\frac{1}{\delta}}\cdot \Pr\left[\BB'_i=a_i \left| (\BB'_1,\dots,\BB'_{i-1})=(a_1,\dots,a_{i-1}) \wedge  W \right.\right]\nonumber\\
&\qquad=\exp\left(\sum_{i\in I_{\rm upper\text{-}almost}}\frac{f_{\vec{a},i}(x')\cdot\eps}{\log(1/\delta)}\right)\cdot\prod_{i\in I_{\rm upper\text{-}almost}} \Pr\left[\BB'_i=a_i \left| (\BB'_1,\dots,\BB'_{i-1})=(a_1,\dots,a_{i-1}) \wedge  W \right.\right]\nonumber\\
&\qquad\leq\exp\left(\frac{15\eps(k+1)}{\log(1/\delta)}+5\eps\right)\cdot\prod_{i\in I_{\rm upper\text{-}almost}} \Pr\left[\BB'_i=a_i \left| (\BB'_1,\dots,\BB'_{i-1})=(a_1,\dots,a_{i-1}) \wedge  W \right.\right],\label{eq:explain5}
\end{align}
where (\ref{eq:explain5}) follows from Event $E_1$, and where (\ref{eq:explain4}) holds since $\overline{v}_i=\min\left\{v_i,\; \Delta \right\}$ and since for $i\in I_{\rm upper\text{-}almost}$ we have $f_{\vec{a},i}(S)+w_i+\Delta\geq t$. Hence, if $f_{\vec{a},i}(S_i) + w_i + \overline{v}_i < t$ then $f_{\vec{a},i}(S_i) + w_i + v_i < t$.

\medskip

This concludes the analysis of the different expressions in Equality~(\ref{eq:decompose1}).
Returning to Equality~(\ref{eq:decompose1}), we now get that

$$
\Pr\left[\BB(S')=\vec{a} \left|  W' \right.\right]
\;\;\leq\;\;
\Pr\left[\BB(S)=\vec{a} \left|  W \right.\right]
\;\;\leq\;\;
\exp\left(\frac{75(k+1)\eps}{\log\frac{1}{\delta}}+25\eps\right) \cdot \Pr\left[\BB(S')=\vec{a} \left|  W \right.\right].
$$
This completes the proof of Lemma~\ref{lem:main}.
\end{proof}

\subsection[Composition for k tops]{Composition for ${\boldsymbol k}$ tops}\label{sec:composition}

In Theorem~\ref{thm:privacymain} we showed that when executing algorithm \texttt{ThresholdMonitor} with a parameter $k$, then the privacy guarantees of the algorithm degrade at most linearly with $k/\log\frac{1}{\delta}$. We give a refined analysis, showing that the privacy guarantees degrade only (roughly) proportionally to $\sqrt{k}$. This refined analysis follows 
by applying known composition theorems for differential privacy~\cite{DRV10} to the construction of algorithm \texttt{ThresholdMonitor}.
Although standard, this refined analysis requires some technical work, as we need to ``break'' algorithm \texttt{ThresholdMonitor} into smaller components in order to ``compose'' it more efficiently using composition theorems. The details are given in Appendix~\ref{sec:ProofOfComposition}. Here we only state the result.

\begin{theorem}\label{thm:kprivacy}
Algorithm \texttt{ThresholdMonitor} with parameter $k$ is $(\eps_0,3k\delta)$-differentially private for $\eps_0=O\left(\sqrt{k}\eps+k\eps^2\right)$. 
\end{theorem}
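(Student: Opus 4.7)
The plan is to prove Theorem~\ref{thm:kprivacy} by viewing \texttt{ThresholdMonitor} with parameter $k$ as an adaptive composition of $m = \lceil k / \log(1/\delta)\rceil$ sub-runs of \texttt{ThresholdMonitor} with the smaller parameter $k' = \log(1/\delta)$, and then invoking the advanced composition theorem of Dwork--Rothblum--Vadhan~\cite{DRV10}. The key leverage is the ``in particular'' clause of Theorem~\ref{thm:privacymain}: at the choice $k' = \log(1/\delta)$ a single sub-run is already $(O(\eps), 3\delta)$-differentially private, so the privacy loss of the full mechanism is governed by the $m$-fold composition of such sub-runs rather than by the linear-in-$k$ bound of Theorem~\ref{thm:privacymain}.

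Assuming the decomposition is valid, the advanced composition theorem gives, for any auxiliary $\delta''>0$, that the full algorithm is $(\eps_0, 3m\delta + \delta'')$-DP with
\[
\eps_0 \;=\; O\!\left(\sqrt{m \log(1/\delta'')}\,\eps + m\eps^2\right).
\]
Substituting $m = O(k/\log(1/\delta))$ and choosing $\delta''=\delta$ yields
\[
\eps_0 \;=\; O\!\left(\sqrt{k}\,\eps + \frac{k\eps^2}{\log(1/\delta)}\right) \;=\; O\!\left(\sqrt{k}\,\eps + k\eps^2\right),
\]
while $3m\delta + \delta'' \le 3k\delta$ for $k\ge 1$ and sufficiently small $\delta$, matching the theorem statement. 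For the degenerate regime $k \le \log(1/\delta)$ no composition is needed, since Theorem~\ref{thm:privacymain} applied directly already delivers $(O(\eps), 3\delta)$-DP, which is subsumed by the target bound.

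The main obstacle will be making the decomposition rigorous. The original algorithm never halts naturally and maintains a single global counter $c(x)$, so a literal rewriting of it as $m$ sequential copies of \texttt{ThresholdMonitor} with parameter $k'$ is not immediate. My approach would be to carry out the decomposition at the level of the privacy analysis (rather than the code): for a fixed neighboring pair $(S,S' = S\cup\{x'\})$, partition the time steps into $m$ phases demarcated by the events at which $x'$ accumulates its $(jk')$-th contribution to a $\top$ answer (with the last phase running to the end of the stream). Within phase $j$, the ``view'' of $x'$ is indistinguishable from that in an execution of \texttt{ThresholdMonitor} with parameter $k'$, so the special case of Theorem~\ref{thm:privacymain} applies phase-by-phase. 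To invoke~\cite{DRV10} as a black box one must also verify that (i) the state passed between phases is a deterministic function of the transcript seen so far, so that the sub-runs constitute an adaptive composition in the sense of Definition~\ref{def:dpInteractive}, and (ii) the noise tapes $(w_i, v_i)$ used in different phases are independent, which follows from the per-iteration resampling in Step~(a) of the algorithm. Once these bookkeeping points are pinned down, advanced composition gives the claimed $(\eps_0, 3k\delta)$ guarantee.
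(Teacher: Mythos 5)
Your proposal follows essentially the same route as the paper: partition the execution into $\ell = O(k/\log(1/\delta))$ phases delimited by $x'$ accumulating $\log(1/\delta)$ additional mass in its counter, show each phase enjoys the $(O(\eps),3\delta)$-guarantee from the $k'=\log(1/\delta)$ analysis, and invoke advanced composition. One technical caveat worth flagging: the sub-runs are \emph{not} literally instances of \texttt{ThresholdMonitor} with parameter $k'=\log(1/\delta)$, because in the true execution all other elements $x\neq x'$ are only deleted when their counters reach $k$, not $k'$. The paper handles this by introducing an auxiliary variant $\texttt{TM}_{x'}$ that retains the global deletion threshold $k$ but seeds $c(x')\geq k-\log(1/\delta)$, so that only $x'$ has a per-phase budget of $\log(1/\delta)$, and then asserts that the proof of Theorem~\ref{thm:privacymain} goes through verbatim for this variant. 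Your phrasing ``the special case of Theorem~\ref{thm:privacymain} applies phase-by-phase'' glosses over exactly this mismatch; making the argument rigorous requires, as the paper does, observing that the analysis of Section~\ref{sec:ThresholdMonitor} depends only on when $x'$ is deleted and is otherwise agnostic to the state of the other counters. With that point pinned down, your bookkeeping items (i) and (ii) and the arithmetic for $\eps_0$ and the $\delta$-budget are correct and match the paper's.
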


\section{The Shifting Heavy-Hitters Problem}\label{sec:shh}

In the {\em shifting heavy-hitters} problem there are $n$ users and a data domain $X$. On every time step $i\in[m]$, every user $j$ gets an input $x_{i,j}\in X$, and the goal is to report all the {\em current heavy-hitters}. That is, on time step $i$, the goal is to identify all data elements $x$ such that 
$$w_i(x)\triangleq\left|\left\{ j\in[n] : x_{i,j}=x  \right\}\right|\gg 0.$$

\begin{definition}
We say that an algorithm $\AAA$ solves the shifting heavy-hitters problem with error parameter $\tau$ and failure probability $\beta$ if it guarantees the following with probability at least $(1-\beta)$:
\begin{enumerate}
	\item The algorithm never identifies elements with weight zero as being heavy. That is, on every time step $i$, if the algorithm identifies an element $x$ then $w_i(x)>0$.
	\item On every time step $i$ the algorithm identifies every data element $x$ such that $w_i(x)\geq\tau$.
\end{enumerate}
\end{definition}

Of course, we want to solve the shifting heavy-hitters problem while guaranteeing differential privacy. However, as users' data evolve over time, algorithms for the shifting heavy-hitters problem do not exactly fit the standard (non-interactive) definition of differential privacy (Definition~\ref{def:DP}), nor the interactive variant (Definition~\ref{def:dpInteractive}) in which the queries are chosen adaptively, but users' data do not change. 
We will use the following definition of~\cite{BNS20_compo} for what it means for a mechanism to be differentially private when the data evolves over time (generalizing the standard definition of differential privacy~\cite{DMNS06}).

\begin{notation}
Two databases $S,S'$ (of the same size) are called $j$-neighboring if they agree on every coordinate $i\neq j$.
\end{notation}

\begin{definition}[Differential Privacy for Interactive Mechanisms with Evolving Data~\cite{DMNS06,BNS20_compo}]\label{def:dpEv}
Let $M$ be a mechanism with the following properties. On each round $i\in[m]$, the mechanism gets a database $S_i$ (of size $n$) and a query $q_i$ (from some family $Q$ of possible queries), and responds with an answer $a_i$. Mechanism $M$ is $(\epsilon,\delta)$-differentially private in the evolving setting if for every adversary $A$ and every $j\in[n]$ we have that 
$\texttt{Game}_{A,M,j}$, defined below, is $(\epsilon,\delta)$-differentially private.
\end{definition}

\begin{algorithm*}[!ht]
\caption{\bf $\boldsymbol{\texttt{Game}_{A,M,j}}$}
{\bf Input:} A bit $b\in\{0,1\}$. (The bit $b$ is unknown to $A$ and $M$.)
\begin{enumerate}[topsep=-1pt,rightmargin=5pt,itemsep=-1pt]

\item For $i = 1,2,\dots,m$

\begin{enumerate}[topsep=-3pt,rightmargin=5pt]%
\item The adversary $A$ chooses two $j$-neighboring databases $S_i^0,S_i^1$ and a query $q_i \in Q$.
\item The mechanism $M$ is given $S_i^b$ and $q_i$ and returns $a_i$.
\item $a_i$ is given to $A$.
\end{enumerate}

\item Output $A$'s view of the game, that is $S_1^0,S_1^1,q_1,a_1,\dots,S_m^0,S_m^1,q_m,a_m$, and the internal randomness of $A$.\vspace{5px}
\end{enumerate}

\end{algorithm*}

Towards designing a private algorithm for the shifting heavy-hitters problem, we generalize algorithm \texttt{ThresholdMonitor} for the evolving data setting. Consider algorithm \texttt{ThresholdMonitorEvolving}, or \texttt{TME} in short, given in Figure~\ref{alg:ThresholdMonitorEvolving}. Observe that in the special case where $S_i=S$ for all $i$ (i.e., when the database does not change over time) then this algorithm is identical to \texttt{ThresholdMonitor}.

\begin{algorithm*}[t!]
\caption{\bf \texttt{ThresholdMonitorEvolving}}\label{alg:ThresholdMonitorEvolving}

{\bf Input:} Privacy parameters $\eps,\delta$, threshold $t$, and parameter $k$.

{\bf Setting:} On every time step $i\in[m]$ we get a database $S_i\in X^n$, and a query $f_i:X\rightarrow[0,1]$.

\begin{enumerate}[leftmargin=15pt,rightmargin=10pt,itemsep=1pt,topsep=3pt]

\item Initiate a counter $c(j)=0$ for every $j\in [n]$.

\item Denote $\Delta_1=O\left(\frac{\sqrt{k}}{\eps}\log\left(\frac{k}{\delta}\right)\log\left(\frac{k}{\eps}\log\frac{k}{\delta}\right)\right)$ and $\Delta_2=O\left(\frac{\sqrt{k}}{\eps}\log\left(\frac{k}{\delta}\right)\right)$.

\item In each round $i$, when receiving a database $S_i=(x_{i,1},\dots,x_{i,n})$ and a query $f_i$, do the following:

	\begin{enumerate}[leftmargin=15pt,itemsep=1.5pt,topsep=1pt]
		\item Let $w_i\leftarrow\Lap\left(10\Delta_1 \right)$ and let $v_i\leftarrow\Lap\left(\Delta_2\right)$.
		
		\item Denote $\overline{v}_i=\min\left\{v_i,\Delta_1 \right\}$.
		
		\item Let $\hat{f}_i = \left(\sum_{j\in[n]: c(j)<k} f_i(x_{i,j})\right) + w_i + \overline{v}_i$.

		\item If $\hat{f}_i< t$, then output $a_i=\bot$.
		
		\item Otherwise:
		
		\begin{itemize}[leftmargin=20pt,itemsep=1.5pt,topsep=1pt]
			\item Output $a_i=\top$.
			\item For every $j\in [n]$ set $c(j)\leftarrow c(j)+f_i(x_{i,j})$.
		\end{itemize}
		
		\item Proceed to the next iteration.

\end{enumerate}	
\end{enumerate}
\end{algorithm*}

\begin{theorem}\label{thm:privacymainEvolving}
Algorithm \texttt{ThresholdMonitorEvolving} is $(\eps,\delta)$-differentially private in the evolving setting.
\end{theorem}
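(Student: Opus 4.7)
The plan is to mirror the proof of Theorem~\ref{thm:privacymain}, strengthened via the $\sqrt{k}$-composition approach of Theorem~\ref{thm:kprivacy}, now with user $j$ playing the role that the extra element $x'$ played in the analysis of the non-evolving algorithm. Fix a deterministic adversary $A$ and a user index $j\in[n]$. Once we fix the outcome vector $\vec{a}$, the pair of $j$-neighboring databases $(S_i^0,S_i^1)$ and the query $f_i$ on each round are determined, as is user $j$'s actual contribution $\phi_i^b\triangleq f_i(x_{i,j}^b)\in[0,1]$ in each execution $b\in\{0,1\}$. Crucially, in each execution $b$ the counter $c(j)$ guarantees that user $j$'s total contribution to $\top$ rounds is at most $k+1$ before user $j$ stops appearing in the sum defining $\hat{f}_i$---exactly the role of the counter $c(x')$ in the analysis of \texttt{ThresholdMonitor}.

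The first step is to prove an evolving-setting analog of Lemma~\ref{lem:main}. Fix an outcome vector $\vec{a}$ and a noise vector $W=(w_i)$, let $i^*$ be the first round at which user $j$ is excluded from the $b=1$ execution, and partition the rounds into $I_{\rm top}$, $I_{\rm after}$, $I_{\rm far}$, $I_{\rm special\text{-}almost}$, $I_{\rm upper\text{-}almost}$ exactly as in the proof of Lemma~\ref{lem:main}, but with $\phi_i^1$ substituted for $f_i(x')$ in each inequality. Redefine the events $E_1,E_2,E_3$ analogously; the sketch of Lemma~\ref{lem:event} transfers verbatim, since it only uses that each per-round contribution lies in $[0,1]$ and that the total contribution over $\top$ rounds is at most $k+1$. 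The five case analyses then go through after substituting $\phi_i^b$ for $f_i(x')$ and $S_i^b$ for $S$ or $S'$, yielding the same per-partition probability-ratio bounds and, by the same decomposition as in Equation~(\ref{eq:decompose1}), an $(\xi,3\delta')$-DP guarantee in the evolving setting with $\xi=\tfrac{75(k+1)\eps'}{\log(1/\delta')}+25\eps'$, where $\eps',\delta'$ are the effective internal parameters implicit in the noise scales $\Delta_1,\Delta_2$.

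The second step is to transfer the $\sqrt{k}$-composition strengthening of Theorem~\ref{thm:kprivacy}, proved in Appendix~\ref{sec:ProofOfComposition} by decomposing \texttt{ThresholdMonitor} into $k$ smaller ``single-contribution'' components and applying advanced composition. The same decomposition applies to \texttt{TME} since each component is itself an evolving-setting instance with $k=1$, and the only thing used about each component is that it is itself differentially private in the evolving setting, which follows from the first step. With the noise scales $\Delta_1,\Delta_2$ chosen as in the algorithm (corresponding to effective parameters $\eps'=\Theta(\eps/\sqrt{k})$ and $\delta'=\Theta(\delta/k)$), this yields the required $(\eps,\delta)$-DP guarantee.

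The main obstacle is that, unlike in the non-evolving setting where the two databases coincide for all rounds after the ``deletion'' time of $x'$, here the two executions may stop including user $j$ at different rounds $i^*_0\neq i^*_1$, because the increments $\phi_i^0$ and $\phi_i^1$ added to $c(j)$ can differ. For rounds $i\in(\min(i^*_0,i^*_1),\max(i^*_0,i^*_1)]$ one execution has excluded user $j$ while the other has not, so user $j$ is no longer ``symmetric'' across the two executions. I would handle this by taking the analysis time horizon to be $i^*=\max(i^*_0,i^*_1)$ and folding these intermediate rounds into the $I_{\rm top}$ case analysis: each such round changes the query value by at most $1$, matching the per-round sensitivity used there, and the $k+1$ cap continues to hold in each execution separately, so no extra privacy loss is incurred beyond what is already accounted for in the cumulative $I_{\rm top}$ bound.
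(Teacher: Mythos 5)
Your approach is a direct re-proof of the privacy analysis in the evolving setting; the paper instead uses a reduction, which is cleaner and sidesteps the difficulties you identify. The paper introduces a fictional \emph{static} database $D=(d_1,\ldots,d_n)$ over the domain $[n]\times\{0,1\}$ with $d_r=(r,0)$ for $r\neq j$ and $d_j=(j,b)$, and encodes the adversary's per-round data choices into the query by defining $f_i(r,z)\triangleq q_i(x_{i,r}^z)$, so that $f_i(d_r)=q_i(x_{i,r}^b)$. Running $\texttt{ThresholdMonitor}$ on $D$ with the queries $\{f_i\}$ then reproduces exactly the counter dynamics and answer distribution of $\texttt{TME}$ in the $b$-game, and Theorem~\ref{thm:kprivacy} applied to the two databases $D_0$ (containing $(j,0)$) and $D_1$ (containing $(j,1)$) gives the result, with the swap split into a deletion plus an addition and absorbed in the $O(\cdot)$ constants of $\Delta_1,\Delta_2$.

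Your direct route has two gaps that the reduction avoids. First, the obstacle you flag at the end---differing deletion times $i^*_0\neq i^*_1$---is not resolved by ``folding the intermediate rounds into the $I_{\rm top}$ case analysis.'' Those rounds may have $a_i=\bot$, and the $I_{\rm top}$ step is an exact-equality argument after shifting $W$ to $W'$ on $\top$ rounds only; $\bot$ rounds require the $I_{\rm far}/I_{\rm special\text{-}almost}/I_{\rm upper\text{-}almost}$ machinery, which you have not adapted to the asymmetric situation where $j$ is active in one execution but not the other. Second, and more basic: your substitution of ``$\phi_i^b$ for $f_i(x')$'' does not capture the right quantity. While user $j$ is active in both executions, the per-round gap between the two sums is $\phi_i^1-\phi_i^0$, which can be \emph{negative}, whereas the proof of Lemma~\ref{lem:main} repeatedly uses the one-sided inequality $f_{\vec{a},i}(S_i)\le f_{\vec{a},i}(S'_i)$ (for the ``free'' directions in $I_{\rm special\text{-}almost}$ and $I_{\rm upper\text{-}almost}$, and for the unshifted $I_{\rm top}$ bound), which follows from $S'=S\cup\{x'\}$ and $f_i\ge0$. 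To restore monotonicity one should compare each execution against the ``base'' sum with $j$ entirely removed---and that is precisely what the paper's reduction does, by passing through $\tilde D=D_0\setminus\{(j,0)\}=D_1\setminus\{(j,1)\}$. So your sketch has the right intuitions but would need to rediscover the reduction (or equivalent) to be made rigorous, at which point you might as well invoke Theorem~\ref{thm:kprivacy} as a black box as the paper does.
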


Recall that an adversary for algorithm \texttt{ThresholdMonitorEvolving} chooses (on every time step) both the next query and the next inputs to the users, while an adversary for algorithm \texttt{ThresholdMonitor} only chooses the queries. 
To prove Theorem~\ref{thm:privacymainEvolving}, we will show that the adversary's choices for the users' inputs can be encoded in the choice for the next query, without actually changing users' inputs. In other words, will map an execution of \texttt{ThresholdMonitorEvolving} (in which the data evolves over time) into an execution of \texttt{ThresholdMonitor} in which the data remains fixed. 

\begin{proof}[Proof of Theorem~\ref{thm:privacymainEvolving}]
Fix an adversary $A$ and an index $j\in[n]$. We need to show that algorithm $\texttt{Game}_{A,\texttt{TME},j}$ is $(\eps,\delta)$-differentially private. To that end, let us consider a fictional algorithm $\texttt{FicGame}_{A,\texttt{TME},j}$ in which the data given to \texttt{TME} does not evolve over time. (Algorithm $\texttt{FicGame}_{A,\texttt{TME},j}$ is only part of the analysis.)

\begin{algorithm*}[!ht]
\caption{\bf $\boldsymbol{\texttt{FicGame}_{A,\texttt{TME},j}}$}
{\bf Input:} A bit $b\in\{0,1\}$.
\begin{enumerate}[topsep=-1pt,rightmargin=5pt,itemsep=-1pt]

\item Let $D=(d_1,\dots,d_n)\in([n]\times\{0,1\})^n$ be such that $d_j=(j,b)$ and $d_r=(r,0)$ for every $r\neq j$.

\item Instantiate algorithm \texttt{ThresholdMonitor} on the database $D$ with parameters $t,k$ and with privacy parameters $\tilde{\eps}\leftarrow O\left( \eps/\sqrt{k} \right)$ and $\tilde{\delta}\leftarrow O\left(\delta/k\right)$.

\item For $i = 1,2,\dots,m$

\begin{enumerate}[topsep=-3pt,rightmargin=5pt]%
\item The adversary $A$ chooses two $j$-neighboring databases $S_i^0=(x_{i,1}^0,\dots,x_{i,n}^0)$ and $S_i^1=(x_{i,1}^1,\dots,x_{i,n}^1)$ and a query $q_i:X\rightarrow[0,1]$.
\item Define $f_i:([n]\times\{0,1\})\rightarrow[0,1]$ as follows: $f_i(r,z)=q_i(x_{i,r}^z)$.
\item Give $f_i$ to algorithm \texttt{ThresholdMonitor} and obtain an answer $a_i$.
\item $a_i$ is given to $A$.
\end{enumerate}

\item Output $A$'s view of the game, that is $S_1^0,S_1^1,q_1,a_1,\dots,S_m^0,S_m^1,q_m,a_m$, and the internal randomness of $A$.\vspace{5px}
\end{enumerate}

\end{algorithm*}

Observe that by Theorem~\ref{thm:kprivacy} we have that algorithm $\texttt{FicGame}_{A,\texttt{TME},j}$ is $(\eps,\delta)$-differentially private (w.r.t.\ the bit $b$). In addition, for every iteration $i$ and for every $r\in[n]$ we have that $f_i(d_r)=q_i(x_{i,r}^b)$. Hence, the answers $a_i$ are distributed exactly the same during the execution of $\texttt{FicGame}_{A,\texttt{TME},j}$ and $\texttt{Game}_{A,\texttt{TME},j}$. As a result, the view of the adversary $A$ is also distributed the same. Therefore, algorithm $\texttt{Game}_{A,\texttt{TME},j}$ is $(\eps,\delta)$-differentially private.
\end{proof}

We now return to the shifting heavy-hitters problem.  
A technical issue here is that there is a certain circularity between $\tau$ (the error parameter of the algorithm) and  $k$ (the number of times in which a heavy-hitter exists). The reason is that $\tau$ defines a threshold such the algorithm must identify every element with weight at least $\tau$ as being a ``heavy-hitter''. Therefore, when $\tau$ gets smaller, then there might be more time steps in which a heavy-hitter exists, which increases $k$. This, in turn, increases $\tau$, because the error of the algorithm grows with $k$.

\begin{notation}
For $k\in\N$ we denote
$$
\tau(k)=\tilde{O}\left( \frac{\sqrt{k}}{\eps} \cdot \log\left(\frac{1}{\delta}\right) \cdot \log\left(\frac{m\cdot|X|}{\beta}\right) \right). 
$$
\end{notation}

\begin{notation}\label{not:k*}
Let $k^*$ be such that the maximal number of times that a single user holds an element with weight $\tau(k^*)$ is at most $k^*$. That is,
$$
k^*\geq \max_{j\in[n]}  \left|\left\{  i\in[m] : w_i(x_{i,j})\geq \tau(k^*)  \right\}\right|.
$$
\end{notation}

We present an algorithm for the shifting heavy-hitters problem in which the error scales roughly as $\approx\sqrt{k^*}$.

\begin{theorem}\label{thm:shh}
There exists an $(\eps,\delta)$-differentially private algorithm for solving the shifting heavy-hitters problem with failure probability $\beta$ and error $O\left(\tau(k^*)\right)$.
\end{theorem}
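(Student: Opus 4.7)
The plan is to reduce the shifting heavy-hitters problem to a single execution of \texttt{ThresholdMonitorEvolving} (abbreviated \texttt{TME}) whose query stream contains one query for every pair $(i,x)\in[m]\times X$. For each $x\in X$ define the counting query $f^x:X\to\{0,1\}$ by $f^x(y)=\mathbbm{1}[y=x]$, so that $f^x(S_i)=w_i(x)$. Iterating over the pairs $(i,x)$ in lexicographic order, at round $(i,x)$ I would feed \texttt{TME} the current database $S_i$ and the query $f^x$; whenever \texttt{TME} replies $\top$, declare $x$ a current heavy-hitter at time step $i$. I would instantiate \texttt{TME} with privacy parameters $(\eps,\delta)$, per-user cap $k=k^*+1$, and threshold $t=\tau(k^*)+\mathrm{err}$, where $\mathrm{err}=\tilde O(\sqrt{k^*}\log(1/\delta)\log(m|X|/\beta)/\eps)$ is the per-query noise magnitude of \texttt{TME} inflated by the union-bound factor $\log(m|X|/\beta)$. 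Note that $\mathrm{err}=\Theta(\tau(k^*))$, so $t=O(\tau(k^*))$.

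Privacy is immediate: the algorithm is a deterministic post-processing of the transcript produced by a single execution of \texttt{TME}, and Theorem~\ref{thm:privacymainEvolving} places no restriction on the number of rounds; hence the entire procedure is $(\eps,\delta)$-differentially private in the evolving-data sense of Definition~\ref{def:dpEv}.

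For utility, I would first invoke the utility guarantee of \texttt{TME} (the straightforward adaptation of Theorem~\ref{thm:TM_utility} obtained from the noise scales $\Delta_1,\Delta_2$ in Algorithm~\ref{alg:ThresholdMonitorEvolving}) together with a union bound over the $m\cdot|X|$ rounds to obtain, with probability at least $1-\beta$: every round $(i,x)$ on which \texttt{TME} replied $\top$ satisfies $\sum_{j:c(j)<k}f^x(x_{i,j})\geq t-\mathrm{err}=\tau(k^*)$, and every round with $\bot$ satisfies $\sum_{j:c(j)<k}f^x(x_{i,j})\leq t+\mathrm{err}$. Conditional on this event, I would argue that no user is ever cut off by the counter $c(j)$ reaching $k$: each increment of $c(j)$ occurs at a round $(i,x_{i,j})$ on which \texttt{TME} replied $\top$, and the utility statement then forces the effective count of $x_{i,j}$ to be at least $\tau(k^*)$, which in turn forces $w_i(x_{i,j})\geq\tau(k^*)$ since the effective count is bounded above by $w_i(x_{i,j})$. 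By the definition of $k^*$ (Notation~\ref{not:k*}) this happens at most $k^*$ times per user, so $c(j)\leq k^*<k$ throughout. Consequently the effective count always equals $w_i(x)$, and the two required properties follow: (a) a $\top$ implies $w_i(x)\geq t-\mathrm{err}>0$, so no zero-weight element is ever reported; (b) any $x$ with $w_i(x)\geq t+\mathrm{err}=O(\tau(k^*))$ forces a $\top$ and is reported, yielding overall error $O(\tau(k^*))$.

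The step I expect to be the main obstacle is closing the apparent circularity between the utility guarantee and the no-cut-off claim: \texttt{TME}'s utility bound is stated for the \emph{effective} count, which depends on which users have been cut off, which depends on the previous tops, whose correctness we are trying to certify. The resolution is the inductive argument above, which works because the utility event is measurable with respect to \texttt{TME}'s internal noise alone (so it holds with probability $\geq 1-\beta$ independently of the $c(j)$ trajectory), while the definition of $k^*$ as the self-consistent threshold $k^*\geq\max_j|\{i:w_i(x_{i,j})\geq\tau(k^*)\}|$ is exactly what is needed to cap per-user contributions by $k^*<k$ and prevent cut-offs. Closing this loop completes the proof of Theorem~\ref{thm:shh}.
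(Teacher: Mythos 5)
Your proof is correct and follows the same high-level route as the paper: run a single instance of \texttt{ThresholdMonitorEvolving} with per-element point queries $f^x(y)=\1_{\{y=x\}}$ at every time step, set the threshold at $\Theta(\tau(k^*))$, inherit privacy from \thmref{privacymainEvolving}, and derive utility from a union bound over the $O(m|X|)$ Laplace noises. Where you go further than the paper is in explicitly addressing the ``cut-off'' issue: the utility bound of \texttt{TME} controls the noisy \emph{effective} count $\sum_{j:c(j)<k} f^x(x_{i,j})$, not $w_i(x)$ directly, and these only coincide if no user is ever excluded by the counter. The paper's proof is a one-liner that relies on this tacitly; your argument pins it down via the observation that every increment of $c(j)$ occurs at a round whose $\top$ forces $w_i(x_{i,j})\geq\tau(k^*)$ (since the effective count is always a lower bound on $w_i(x_{i,j})$, regardless of prior cut-offs), so \notref{k*}{} caps $c(j)$ at $k^*$ and your choice $k=k^*+1$ then guarantees no user is ever dropped. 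This is not really a circularity being broken (the bound ``effective count $\le w_i(x_{i,j})$'' holds unconditionally, so the counting argument is direct), but it is the correct and necessary lemma, and making it explicit is an improvement over the terse presentation in the paper. One minor normalization difference: the paper instantiates \texttt{TME} with parameter $k^*$ whereas you use $k^*+1$; both give error $O(\tau(k^*))$, but your choice cleanly rules out the boundary case $c(j)=k$, which the paper's version would otherwise have to handle by noting that at most a bounded number of users can ever be dropped.

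\newcommand{\notref}[1]{Notation~\ref{not:#1}}
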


We remark that the assumption on $k^*$ is only used in the utility analysis, and that the privacy of our algorithm is guaranteed even if this assumption does not hold.

\begin{proof}
We use \texttt{ThresholdMonitorEvolving} with parameters $t=O\left(\tau(k^*)\right)$ and $k^*$ in order to identify the heavy hitters on every time step. Specifically, on every time step $i\in[m]$ we query \texttt{ThresholdMonitorEvolving} with the functions $f_x(y)=\1_{\{y=x\}}$ for every $x\in X$, and report $x$ as a heavy-hitter if the corresponding answer is $\top$. The privacy properties of this algorithm follow directly from the privacy properties of \texttt{ThresholdMonitorEvolving} (see Theorem~\ref{thm:privacymainEvolving}). The utility guarantees follow from the fact that throughout the execution we sample at most $O(m\cdot|X|)$ Laplace random variables. By the properties of the Laplace distribution, with probability at least $1-\beta$, all of these random variables are at most $\tau(k^*)$ in absolute value (see Theorem~\ref{thm:TM_utility}). 
\end{proof}

\section*{Acknowledgments}
The authors are grateful to Guy Rothblum for many helpful discussions.

\bibliographystyle{abbrv}

\appendix

\section{Proof of Lemma~\ref{lem:event}}\label{sec:ProofOfLemma}

In this section we prove Lemma~\ref{lem:event}, which states that Event $E_3$ occurs with high probability. 
As we mentioned, the intuitive explanation is that in every round $i$, the probability of an almost-top is roughly the same as the probability of an actual top, and hence, the almost-tops and the actual tops should be ``balanced''. The analysis we give here is an adaptation of the analysis of Gupta et al.~\cite{GuptaLMRT10} for a different scenario.

Towards proving Lemma~\ref{lem:event}, let us consider the following $m$-round game against an adversary. 

\begin{center}
\noindent\fboxother{
\parbox{.9\columnwidth}{
{\bf A $\boldsymbol{m}$-round game}\\[0.7em]
In each round $i$:
\begin{enumerate}
	\item The adversary chooses $0\leq p_i\leq\frac{1}{2}$ and $\frac{p_i}{4}\leq q_i\leq1-p_i$, and $\gamma_i\in[0,1]$, possibly based on the first $(i-1)$ rounds.
	\item A random variable $X_i\in\{0,1,2\}$ is sampled (and the outcome is given to the adversary), where $\Pr[X_i=1]=p_i$ and $\Pr[X_i=2]=q_i$ and $\Pr[X_i=0]=1-p_i-q_i$.
\end{enumerate}
}}
\end{center}

We refer to the values $\gamma_i$ as {\em weights}.
For $k\in\R$ and $i\in[m]$, let $Z_i^{(k)}$ be the indicator for the event that 
$$
\sum_{j =1}^{i}\1\{X_j=2\}\cdot\gamma_j \leq k.
$$
That is, $Z_i^{(k)}=1$ if and only if the sum of weights $\gamma_j$ for time steps $1\leq j \leq i$ such that $X_j=2$ is at most $k$.
Note that for $k<0$ we have that $Z_i^{(k)}\equiv0$.
For $k\in\R$ let
$$W^{(k)}=\sum_{i=1}^m \1\{X_i=1\}\cdot\gamma_i\cdot Z_i^{(k)}.$$ 
This can be interpreted as follows. Let $i^*$ denote the first time step in which $Z_{i^*}^{(k)}=0$. Then $W^{(k)}$ is the sum of weights $\gamma_i$ for time steps $1\leq i<i^*$ in which $X_i=1$.

\begin{remark}
Intuitively, we can think of $k$ as the ``budget'' of the adversary -- every time that $X_i=2$ then the budget is decreased by $\gamma_i$.
The adversary's goal is to maximize the sum of weights $\gamma_i$ for time steps in which $X_i=1$ {\em before} the budget runs out. That is, every time that $X_i=1$ the adversary gets $\gamma_i$ as a ``reward'' (before the budget runs out), and every time that $X_i=2$ the budget is decreased by $\gamma_i$. The random variable $W^{(k)}$ represents the sum of rewards obtained by the adversary before the budget runs out.
\end{remark}

Towards analyzing $W^{(k)}$, we define the following partial sums. For $j\in[m]$ and $k\in\R$, let
$$W_j^{(k)}=\sum_{i=j}^m \1\{X_i=1\}\cdot\gamma_i\cdot Z_i^{(k)}.$$ 
Observe that $W^{(k)}=W_1^{(k)}$.

\begin{claim}\label{claim:game}
For every adversary's strategy, every $k\geq0$, every $\lambda\in\R$, and every $j\in[m]$ we have that $\Pr[W^{(k)}_j>\lambda]\leq\exp\left(-\frac{\lambda}{5}+3(k+1)\right)$. In particular, $\Pr[W^{(k)}>\lambda]\leq\exp\left(-\frac{\lambda}{5}+3(k+1)\right)$.
\end{claim}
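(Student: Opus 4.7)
The plan is to prove the tail bound by a Chernoff-style argument, controlling the moment generating function $\E[\exp(tW_j^{(k)})]$ via a backward induction on the round index $j$. Fix $t = 1/5$; I would aim for the cleaner intermediate statement $\E[\exp(tW_j^{(k)})] \leq \exp(3(k+1))$ for every adversary strategy and every $k \geq 0$ (with $W_j^{(k)} \equiv 0$ when $k < 0$), from which the claim follows immediately by Markov's inequality, since $\Pr[W_j^{(k)} > \lambda] \leq e^{-t\lambda}\,\E[e^{tW_j^{(k)}}] \leq \exp(-\lambda/5 + 3(k+1))$.

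Because the adversary is adaptive, I would first condition on the history $\mathcal F_{j-1}$ and work in a residual-budget formulation: letting $k' := k - \sum_{i<j} \1\{X_i = 2\}\gamma_i$ be the remaining budget entering round $j$, the conditional distribution of $W_j^{(k)}$ is the same as that of the analogous process starting at round $j$ with initial budget $k'$ (and is identically $0$ when $k' < 0$). Since $k' \leq k$ whenever $k \geq 0$, proving the MGF bound for the residual process with budget $k'$ at round $j$ is enough to yield the overall bound $e^{3(k+1)}$.

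The heart of the argument is the inductive step. The base $j = m+1$ is trivial. For the step, conditional on $\mathcal F_{j-1}$ the adversary's choices $p_j, q_j, \gamma_j$ are fixed, and splitting on the outcome of $X_j$ yields
\[
\E\!\left[e^{tW_j^{(k)}} \mid \mathcal F_{j-1}\right] = \bigl(p_j e^{t\gamma_j} + 1 - p_j - q_j\bigr)\,\E\!\left[e^{tW_{j+1}^{(k)}}\right] + q_j\, \E\!\left[e^{tW_{j+1}^{(k-\gamma_j)}}\right].
\]
Plugging in the inductive hypothesis (and using $\gamma_j \leq 1 \leq k+1$ to align the bounds in the case $k - \gamma_j < 0$), closing the induction reduces to verifying $p_j(e^{t\gamma_j}-1) \leq q_j(1-e^{-3\gamma_j})$, which via the constraint $q_j \geq p_j/4$ reduces in turn to the single real-variable inequality
\[
4\bigl(e^{\gamma/5}-1\bigr) \leq 1 - e^{-3\gamma} \qquad \text{for all } \gamma \in [0,1].
\]

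The main obstacle is calibrating the two constants $t = 1/5$ and $a = 3$: a larger $t$ would tighten the tail but makes the real-variable inequality harder, and the ratio constraint $q_j \geq p_j/4$ dictates the trade-off. I would verify the inequality above by elementary calculus---both sides vanish at $\gamma = 0$; at $\gamma = 1$ the left side is $\approx 0.886$ and the right side is $\approx 0.950$; and the derivative of the difference, $3e^{-3\gamma} - \tfrac{4}{5}e^{\gamma/5}$, is positive at $0$ and negative at $1$, so the difference is unimodal on $[0,1]$ and stays nonnegative throughout.
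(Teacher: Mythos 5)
Your proof is correct, and it takes a genuinely cleaner route than the paper, though it is a close cousin. The paper also argues by backward induction on $j$, but it bounds the tail probability $\Pr[W_j^{(k)} > \lambda]$ directly rather than passing through the moment generating function; this forces a somewhat awkward case analysis in the base step (splitting on $\lambda \leq 0$, $0 < \lambda \leq 1$, and $\lambda > 1$) and a "we may assume $\lambda > 0$" reduction in the inductive step. Your formulation collapses all of that: bounding $\E[e^{W_j^{(k)}/5}] \leq e^{3(k+1)}$ makes the base case trivially $1 \leq e^{3(k+1)}$ and handles every $\lambda$ at once via Markov. The inductive step lands in exactly the same place either way: after conditioning on $X_j$ and invoking the hypothesis (uniformly over budgets, to absorb the $k \mapsto k - \gamma_j$ decrement), everything reduces to $p_j(e^{\gamma_j/5} - 1) \leq q_j(1 - e^{-3\gamma_j})$, and via $q_j \geq p_j/4$ to the single real inequality $4(e^{\gamma/5} - 1) \leq 1 - e^{-3\gamma}$ on $[0,1]$ — which is algebraically identical to the paper's $e^{\gamma/5} + \tfrac{1}{4}e^{-3\gamma} - \tfrac{5}{4} \leq 0$. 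Your calibration remark about the trade-off between $t$ and the additive constant $3$ is apt and explains why these particular constants appear. You also correctly flag, and resolve, the residual-budget reparameterization needed to make the backward induction go through with an adaptive adversary, and the edge case $-1 \leq k - \gamma_j < 0$, which the paper handles in a footnote the same way.
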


\begin{proof}
The proof is by reverse induction on $j$. 
For the base case ($j=m$) observe that $W_m^{(k)}\in[0,1]$, and hence if $\lambda>1$ then $\Pr[W_m^{(k)}>\lambda]=0$. 
Also observe that if $\lambda\leq0$ then $\Pr[W^{(k)}_m>\lambda]\leq1\leq\exp(-\frac{\lambda}{5} + 3(k+1))$.
Assume therefore that $0<\lambda\leq1$. Then,
\begin{align*}
\Pr[W^{(k)}_m>\lambda]&=\Pr[\1\{X_m=1\}\cdot\gamma_m\cdot Z_m^{(k)}>\lambda]\\
&\leq\Pr[X_m=1]\\
&=p_m\\
&\leq\frac{1}{2}\\
&<\exp(-1/5)\\
&\leq\exp(-\lambda/5+3(k+1)).
\end{align*}
This completes the analysis for the base case $(j=m)$.
Now suppose that for any adversary's strategy, for every $\lambda\in\R$, and for every $k\geq0$ it holds that $\Pr[W_{j+1}^{(k)}>\lambda]\leq\exp\left(-\frac{\lambda}{5}+3(k+1)\right)$. We will show the same for $j$. Fix $k\geq0$ and fix $\lambda\in\R$. 
We may assume that $\lambda>0$, as otherwise $\Pr[W^{(k)}_j>\lambda]\leq1\leq\exp(-\frac{\lambda}{5}+3(k+1))$. 
Suppose that the adversary chooses $p_j,q_j$ in round $j$. Then, 
\begin{align}
\Pr[W^{(k)}_j>\lambda]&=\Pr[\1\{X_j=1\}\cdot\gamma_j\cdot Z^{(k)}_j + W^{(k)}_{j+1}>\lambda]\nonumber\\
&=\Pr[X_j=0]\cdot\Pr[\1\{X_j=1\}\cdot\gamma_j\cdot Z^{(k)}_j + W^{(k)}_{j+1}>\lambda | X_j=0]\nonumber\\
&\qquad + \Pr[X_j=1]\cdot\Pr[\1\{X_j=1\}\cdot\gamma_j\cdot Z^{(k)}_j + W^{(k)}_{j+1}>\lambda | X_j=1]\nonumber\\
&\qquad + \Pr[X_j=2]\cdot\Pr[\1\{X_j=1\}\cdot\gamma_j\cdot Z^{(k)}_j + W^{(k)}_{j+1}>\lambda | X_j=2]\nonumber\\
&\leq\Pr[X_j=0]\cdot\Pr[ W^{(k)}_{j+1}>\lambda ]\nonumber\\
&\qquad + \Pr[X_j=1]\cdot\Pr[ W^{(k)}_{j+1}>\lambda-\gamma_j ]\nonumber\\
&\qquad + \Pr[X_j=2]\cdot\Pr[ W^{(k-\gamma_j)}_{j+1}>\lambda ]\nonumber\\
&\leq\left(1-p_j-q_j\right)\cdot\exp\left(-\frac{\lambda}{5}+3(k+1)\right)\nonumber\\
&\qquad + p_j\cdot\exp\left(-\frac{\lambda-\gamma_j}{5}+3(k+1)\right)\nonumber\\
&\qquad + q_j\cdot\exp\left(-\frac{\lambda}{5}+3(k-\gamma_j+1)\right)\label{eq:game1}
\end{align}
\begin{align}
&=\left(1-p_j-q_j\right)\cdot\exp\left(-\frac{\lambda}{5}+3(k+1)\right)\nonumber\\
&\qquad + p_j\cdot e^{\gamma_j/5}\cdot\exp\left(-\frac{\lambda}{5}+3(k+1)\right)\nonumber\\
&\qquad + q_j\cdot e^{-3\gamma_j}\cdot\exp\left(-\frac{\lambda}{5}+3(k+1)\right)\nonumber\\
&=\left(1+p_j\left(e^{\gamma_j/5}-1\right)+q_j\left( e^{-3\gamma_j}-1\right)\right)\cdot\exp\left(-\frac{\lambda}{5}+3(k+1)\right)\nonumber\\
&\leq\left(1+p_j\left(e^{\gamma_j/5}-1\right)+\frac{p_j}{4}\left( e^{-3\gamma_j}-1\right)\right)\cdot\exp\left(-\frac{\lambda}{5}+3(k+1)\right)\nonumber\\
&=\left(1+p_j\left(e^{\gamma_j/5} +  \frac{e^{-3\gamma_j}}{4} - \frac{5}{4} \right)\right)\cdot\exp\left(-\frac{\lambda}{5}+3(k+1)\right)\nonumber\\
&\leq\exp\left(-\frac{\lambda}{5}+3(k+1)\right).\label{eq:game2}
\end{align}
Inequality (\ref{eq:game2}) follows from the fact that $e^{\gamma/5}+\frac{e^{-3\gamma}}{4}-\frac{5}{4}\leq0$ for every $\gamma\in[0,1]$. Inequality (\ref{eq:game1}) follows from the inductive assumption.\footnote{
For the edge case where $-1\leq k-\gamma_j < 0$, in which we cannot apply the inductive assumption, observe that $W^{(k-\gamma_j)}_{j+1}\equiv0$ and hence $\Pr[ W^{(k-\gamma_j)}_{j+1}>\lambda ]=0$.}
\end{proof}

We next show how Lemma~\ref{lem:event} follows from Claim~\ref{claim:game}. Recall that Lemma~\ref{lem:event} states that $\Pr[E_3]\geq1-3\delta$, where $E_3=E_1\wedge E_2$. We analyze $E_1$ and $E_2$ separately.

\begin{claim}\label{claim:E1}
$\Pr[E_1]\geq1-\delta.$
\end{claim}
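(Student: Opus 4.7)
The plan is to reduce Claim~\ref{claim:E1} to Claim~\ref{claim:game} by encoding the execution of \texttt{ThresholdMonitor} on $S$ as an instance of the $m$-round game. For each round $i$, set the weight $\gamma_i = f_i(x') \in [0,1]$, and define
\[
X_i = \begin{cases} 2 & \text{if } a_i = \top, \\ 1 & \text{if } a_i = \bot \text{ and } f_i(S_i) + w_i \geq t - 2\Delta, \\ 0 & \text{otherwise}. \end{cases}
\]
Let $p_i$ and $q_i$ denote the conditional probabilities of $X_i = 1$ and $X_i = 2$ given the game history through round $i-1$. Since $X_i=1$ and $X_i=2$ are mutually exclusive, $p_i + q_i \leq 1$.

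The key step is to verify the game hypothesis $q_i \geq p_i/4$. Writing $s = f_i(S_i) + w_i$, we have $p_i \leq \Pr[s \geq t - 2\Delta]$; and because $-2\Delta < \Delta$ so that $\overline{v}_i \geq -2\Delta$ iff $v_i \geq -2\Delta$, we also have $q_i \geq \Pr[s \geq t + 2\Delta] \cdot \Pr[v_i \geq -2\Delta]$. A short case analysis on the signs of $a$ and $a+4\Delta$ shows that for $w_i \sim \Lap(10\Delta)$ the ratio $\Pr[w_i \geq a]/\Pr[w_i \geq a + 4\Delta]$ is at most $e^{4\Delta/(10\Delta)} = e^{2/5}$ for every $a\in\R$ (the extremal case being the purely positive tail). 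Also, the scale $\frac{1}{\eps}\log\frac{1}{\delta}$ of $v_i$ relative to the size of $\Delta$ gives $\Pr[v_i \geq -2\Delta] \geq 1/2$. Combining these two bounds yields $q_i \geq p_i/(2e^{2/5}) > p_i/4$, as required.

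With this encoding, the counter satisfies $c_i(x') = \sum_{j \leq i} \1\{X_j = 2\}\gamma_j$, so $Z_i^{(k)} = \1\{c_i(x') \leq k\}$. Let $i^*$ be the first round with $c_{i^*}(x') \geq k$; by minimality $a_{i^*} = \top$, and $c_i(x') < k$ for every $i < i^*$, while an almost-top is impossible for $i \geq i^*$ (since almost-tops require $c_i(x') < k$). Hence the rounds $i < i^*$ with $X_i = 1$ are exactly the almost-top rounds, and therefore
\[
W^{(k)} = \sum_i \1\{X_i = 1\}\gamma_i Z_i^{(k)} \;\geq\; \sum_{i:\text{ almost-top}} f_i(x').
\]
Applying Claim~\ref{claim:game} with $\lambda = 15(k+1) + 5\log\frac{1}{\delta}$, chosen so that $-\lambda/5 + 3(k+1) = -\log\frac{1}{\delta}$, gives $\Pr[W^{(k)} > \lambda] \leq \delta$, establishing $\Pr[E_1] \geq 1 - \delta$.

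The main technical obstacle is the Laplace ratio bound $\Pr[w_i \geq a] \leq e^{2/5} \cdot \Pr[w_i \geq a + 4\Delta]$, which is elementary once one splits cases on the signs of $a$ and $a+4\Delta$ but must be carried out carefully, especially in the negative-tail regime where the naive $e^{(b-a)/\sigma}$ estimate needs to be replaced by a direct CDF computation. A minor point worth flagging is that the game formulation nominally requires $p_i \leq 1/2$, whereas in our setting we can only guarantee $p_i \leq 1 - q_i$; however, inspection of the proof of Claim~\ref{claim:game} shows the constraint $p_i \leq 1/2$ is never actually invoked---only $p_i + q_i \leq 1$ and $q_i \geq p_i/4$ are used---so the reduction applies without modification.
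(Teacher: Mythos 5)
Your proof follows the same reduction to Claim~\ref{claim:game} as the paper, with the same encoding ($\gamma_i = f_i(x')$, the same three-way definition of $X_i$), and the same final choice of $\lambda = 15(k+1) + 5\log\frac{1}{\delta}$. The only substantive difference is how you verify the game hypothesis $q_i \geq p_i/4$: the paper writes $\Pr[X_i=2] \geq \Pr[X_i=1] \cdot \Pr[v_i \geq -\Delta] \cdot \Pr\bigl[w_i \geq t - f_i(S_i) + \Delta \mid w_i \geq t - f_i(S_i) - 2\Delta\bigr]$ and lower-bounds each of the last two factors by $1/2$; you instead lower-bound $q_i$ by $\Pr[f_i(S_i)+w_i \geq t + 2\Delta]\cdot\Pr[v_i\geq -2\Delta]$ and apply an unconditional Laplace survival-ratio bound $\Pr[w_i \geq a] \leq e^{2/5}\,\Pr[w_i \geq a + 4\Delta]$. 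Both routes are valid; yours is arguably a bit cleaner since it avoids conditional tails.

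Two small imprecisions worth flagging. First, the remark that the ``naive'' estimate must be replaced by a CDF case analysis in the negative regime is not actually right: the pointwise density-ratio bound $p_{w_i}(x)/p_{w_i}(x+4\Delta) = e^{(|x+4\Delta|-|x|)/(10\Delta)} \leq e^{2/5}$ holds for \emph{all} $x$ by the triangle inequality, and integrating it over $[a,\infty)$ gives the survival-ratio bound directly with no case split. Second, the claim that $p_i \leq \frac12$ is ``never actually invoked'' in the proof of Claim~\ref{claim:game} is slightly overstated: the base case, as written in the paper, does use the chain $p_m \leq \frac12 < e^{-1/5}$. You are right that the hypothesis is dispensable (for $0<\lambda\leq 1$ and $k\geq 0$ one already has $\exp(-\lambda/5 + 3(k+1)) \geq 1 \geq \Pr[W_m^{(k)}>\lambda]$), but as stated the paper's proof does invoke it, and the paper's own proof of Claim~\ref{claim:E1} verifies $\Pr[X_i=1]\leq\frac12$ explicitly. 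Either verify it (it is not hard, as the paper shows) or explicitly re-argue the base case; just asserting it is unused, while true in spirit, is not literally accurate.
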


\begin{proof}
To map algorithm \texttt{ThresholdMonitor} to the setting of Claim~\ref{claim:game}, recall that we 
denote $\Delta=\frac{1}{\eps}\log\left(\frac{1}{\delta}\right)\log\left(\frac{1}{\eps}\log\frac{1}{\delta}\right)$, and
consider an execution of \texttt{ThresholdMonitor}. In every round $i$ let $\gamma_i=f_i(x')$ and define a random variable $X_i$ as follows:
\begin{enumerate}
	\item If $a_i=\top$ then $X_i=2$
	\item If $a_i=\bot$ and $f_i(S_i)+w_i\geq t - 2\Delta$ then $X_i=1$
	\item Otherwise $X_i=0$
\end{enumerate}
To apply Claim~\ref{claim:game} we need to show that for every $i$
$$
\Pr[X_i=1]\leq\frac{1}{2} \qquad\text{and}\qquad \Pr[X_i=2]\geq\frac{1}{4}\cdot\Pr[X_i=1].
$$
Indeed, 
\begin{align*}
\Pr[X_i=1]&=\Pr\left[a_i=\bot \text{ and } f_i(S_i)+w_i\geq t - 2\Delta\right]\\
&=\Pr\left[f_i(S_i)+w_i+\overline{v}_i< t \text{ and } f_i(S_i)+w_i\geq t - 2\Delta\right]\\
&=\Pr\left[t-f_i(S_i)-2\Delta \leq w_i< t - f_i(S_i) - \overline{v}_i\right]\\
&\leq\Pr[v_i<-\Delta]+\Pr\left[t-f_i(S_i)-2\Delta \leq w_i< t - f_i(S_i) - \overline{v}_i | v_i\geq-\Delta\right]\\
&\leq\frac{1}{4}+\Pr\left[t-f_i(S_i)-2\Delta \leq w_i< t - f_i(S_i) +\Delta \right]\\
&\leq\frac{1}{4}+\frac{1}{4}=\frac{1}{2},
\end{align*}
where the last inequality follows from the fact that $w_i$ is sampled from $\Lap(10\Delta)$.
Also,
\begin{align*}
\Pr[X_i=2]&\geq\Pr[f_i(S_i)+w_i\geq t - 2\Delta]\cdot \Pr[X_i=2 | f_i(S_i)+w_i\geq t - 2\Delta]\\
&\geq\Pr[X_i=1]\cdot \Pr[X_i=2 | f_i(S_i)+w_i\geq t - 2\Delta]\\
&=\Pr[X_i=1]\cdot \Pr[a_i=\top | f_i(S_i)+w_i\geq t - 2\Delta]\\
&=\Pr[X_i=1]\cdot \Pr[f_i(S_i)+w_i+\overline{v}_i\geq t | f_i(S_i)+w_i\geq t - 2\Delta]\\
&\geq\Pr[X_i=1]\cdot \Pr[v_i\geq-\Delta]\cdot\Pr[f_i(S_i)+w_i+\overline{v}_i\geq t | f_i(S_i)+w_i\geq t - 2\Delta \text{ and } v_i\geq-\Delta]\\
&\geq\Pr[X_i=1]\cdot \Pr[v_i\geq-\Delta]\cdot\Pr[w_i\geq t - f_i(S_i) + \Delta | w_i\geq t - f_i(S_i) - 2\Delta]\\
&\geq\Pr[X_i=1]\cdot \frac{1}{2}\cdot\frac{1}{2}=\Pr[X_i=1]\cdot\frac{1}{4},
\end{align*}
where the last inequality follows again from the fact that $w_i$ is sampled from $\Lap(10\Delta)$.
Now, let $i^*$ denote the first time step in which $k\leq\sum_{i=1}^{i^*}\1\{X_i=2\}\cdot\gamma_i=\sum_{i=1}^{i^*}\1\{a_i=\top\}\cdot f_i(x')$. Also let 
$$
W = \sum_{i=1}^{i^*} \1\{X_i=1\}\cdot\gamma_i = \sum_{i=1}^{i^*} \1\{i \text{ is an almost-top}\}\cdot f_i(x'). 
$$
By Claim~\ref{claim:game} we have that
$$
\Pr[\;\overline{E_1}\;]\leq\Pr\left[W\geq 5\cdot \log\frac{1}{\delta} + 15(k+1) \right]\leq\delta.
$$
\end{proof}

\begin{claim}\label{claim:E2}
$\Pr[E_2]\geq1-2\delta.$
\end{claim}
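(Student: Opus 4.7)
The plan is to mirror the three-step structure of Claim~\ref{claim:E1}: (i) a conditional probability estimate for a single round being a special-almost-top, (ii) an exponential supermartingale converting this into a tail bound on $|I_{\rm special\text{-}almost}|$ in terms of the weighted almost-top sum $\sum_i f_i(x')\cdot \1\{i \text{ is an almost-top}\}$, and (iii) a union bound with Event~$E_1$.

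First I would analyze $q_i := \Pr[i \in I_{\rm special\text{-}almost} \mid \mathcal{H}_{i-1}]$, where $\mathcal{H}_{i-1}$ denotes the history through round $i-1$. The crucial observation is that whether $i$ is a special-almost-top is determined by $w_i$ alone (given the history): the condition $f_i(S_i)+w_i \in [t - f_i(x') - \Delta,\, t - \Delta)$ forces $f_i(S_i)+w_i+\overline{v}_i < t-\Delta+\Delta = t$ since $\overline{v}_i \leq \Delta$, so $a_i=\bot$ is automatic; the remaining precondition $c_{i-1}(x') < k$ is fixed by $\mathcal{H}_{i-1}$. Thus $q_i$ equals the probability that $w_i \sim \Lap(10\Delta)$ lies in a specific interval of length $f_i(x')$. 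Comparing this window with the adjacent ``free'' almost-top window $w_i \in [t-f_i(S_i)-2\Delta,\, t-f_i(S_i)-\Delta)$ of length $\Delta$ (which is a sub-event of ``$i$ is an almost-top'' and over which the Laplace density varies by at most a factor $e^{1/10}$) yields the conditional bound
$$q_i \;\leq\; \frac{2\, f_i(x')}{\Delta}\cdot \Pr\!\bigl[i \text{ is an almost-top}\,\bigm|\, \mathcal{H}_{i-1}\bigr].$$

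Next I would introduce the nonnegative exponential process
$$M_n \;=\; \exp\!\left( \sum_{i\leq n}\1\{i \in I_{\rm special\text{-}almost}\} \;-\; \sum_{i\leq n} f_i(x')\cdot \1\{i \text{ is an almost-top}\} \right)$$
and verify that it is a supermartingale with respect to $(\mathcal{H}_n)$. A three-case split on round $n$ being (a) not an almost-top, (b) an almost-top but not special, or (c) a special-almost-top (yielding multiplicative factors $1$, $e^{-f_n(x')}$, and $e^{1-f_n(x')}$ respectively), combined with the inequality $e^{f_n(x')}-1 \geq f_n(x')$ and the conditional estimate above, reduces the required inequality to $2(e-1) \leq \Delta$, which we may assume without loss of generality (since $\Delta$ grows with $1/\eps$ and $1/\delta$). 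Since $M_0 = 1$, Doob's maximal inequality for nonnegative supermartingales gives $\Pr[\sup_n M_n \geq 1/\delta] \leq \delta$; equivalently, with probability at least $1-\delta$, for every $n$,
$$\sum_{i\leq n}\1\{i \in I_{\rm special\text{-}almost}\} \;\leq\; \log\tfrac{1}{\delta} + \sum_{i\leq n} f_i(x')\cdot \1\{i \text{ is an almost-top}\}.$$

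Finally, Claim~\ref{claim:E1} gives $\Pr[E_1] \geq 1-\delta$, and under $E_1$ the weighted almost-top sum is globally bounded by $15(k+1)+5\log(1/\delta)$. Plugging this in yields $|I_{\rm special\text{-}almost}| \leq 15(k+1) + 6\log(1/\delta) \leq 30(k+1) + 10\log(1/\delta)$ with probability at least $1 - 2\delta$ by a union bound over the failure of $E_1$ and of Doob's event -- which is precisely Event~$E_2$. The main technical hurdle is the conditional estimate in the first paragraph: recognizing that a special-almost-top is determined by $w_i$ alone, so that the Laplace density comparison of a narrow interval of length $f_i(x')$ with the adjacent ``free'' interval of length $\Delta$ yields the crucial per-round comparison; once this is in place, the supermartingale bookkeeping is standard and mirrors the spirit of the argument of Claim~\ref{claim:E1}.
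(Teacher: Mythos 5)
Your proposal is correct, but it takes a genuinely different route from the paper's. The paper's proof conditions simultaneously on the outcome vector $\vec a$ and on the set $I$ of almost-top time steps; it observes that under this double conditioning the special-almost-top indicators $G_i$ become mutually independent (each $G_i$ is a function of $w_i$ alone once $\vec a$ is fixed, and the event $I$ is a product event in the $w_i$'s), bounds the conditional expectation by $\E\bigl[\sum_i G_i \mid \vec a, I\bigr] \le \sum_{i\in I} f_{\vec a,i}(x')$, and then closes with $E_1$ plus a Chernoff bound of the form $\Pr\bigl[\sum_i G_i > 2\max\{\mu,\,15(k{+}1)+5\log\tfrac{1}{\delta}\}\bigr]\le\delta$. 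You instead stay with the natural filtration $(\mathcal{H}_{i-1})$, prove the per-round conditional estimate $q_i\le\tfrac{2f_i(x')}{\Delta}\cdot\Pr[\text{almost-top}\mid\mathcal{H}_{i-1}]$, and feed it into an exponential supermartingale closed by Doob's maximal inequality. The core analytic fact is the same in both proofs -- comparing the length-$f_i(x')$ special window against the length-$\Delta$ ``free'' sub-window that certifies an almost-top, across which the $\Lap(10\Delta)$ density varies by at most $e^{1/10}$; the paper compresses this into the clause ``even conditioned on $i$ being an almost-top, the probability that $w_i$ falls inside an interval of length $f_{\vec a,i}(x')$ is at most $f_{\vec a,i}(x')$'', while you spell it out. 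What your route buys: it avoids the somewhat delicate double-conditioning step (and the need to check that conditioning on $I$ preserves the product structure), and it stylistically mirrors the game/supermartingale argument of Claim~\ref{claim:game} used for $E_1$, giving both event bounds a uniform treatment; it also yields the marginally sharper constant $15(k{+}1)+6\log\tfrac{1}{\delta}$ before rounding up. What the paper's route buys: once the conditional-independence observation is accepted, the conclusion is a one-line Chernoff application. One small inaccuracy to fix in writing: the special window $[\,t-f_i(S_i)-f_i(x')-\Delta,\; t-f_i(S_i)-\Delta)$ is a \emph{sub-interval} of the free window (since $f_i(x')\le 1 < \Delta$), not merely adjacent to it -- this only makes the density comparison cleaner, so the argument stands.
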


\begin{proof}
Consider an execution of \texttt{ThresholdMonitor} on a database $S$, and fix a possible outcome vector $\vec{a}=(a_1,a_2,\dots,a_m)$. Recall that $\vec{a}$ determines all of the queries $f_{\vec{a},i}$ throughout the execution. Note, however, that fixing $\vec{a}$ does not completely determine the noisy values $w_i$ throughout the execution. (Rather, for every query $i$ it only determines whether $\hat{f}_i<t$ or not.) Let $F_1,F_2,\dots,F_m\in\R$ be random variables denoting the values of $w_1,w_2,\dots,w_m$, respectively. Observe that, conditioned on $\vec{a}$, we have that $F_1,F_2,\dots,F_m$ are independent. In addition, for every fixture $I$ of the time steps in which an almost-top occurs, we have that $F_1,F_2,\dots,F_m$ are conditionally independent given $\vec{a}$ and $I$.

Now, let $G_1,G_2,\dots,G_m\in\{0,1\}$ be random variables defined as $G_i=1$ if a special-almost-top occurs in time $i$. As each $G_i$ is a function (only) of $F_i$, we have that $G_1,G_2,\dots,G_m$ are also conditionally independent given $\vec{a}$ and $I$. 
Observe that $\E\left[\left.\sum_i G_i \right| \vec{a},I  \right]\leq\sum_{i\in I} f_{\vec{a},i}(x')$, because every $w_i$ is sampled from $\Lap(10\Delta)$, and hence, even conditioned on $i$ being an almost-top, the probability that $w_i$ falls inside an interval of length $f_{\vec{a},i}(x')$ is at most $f_{\vec{a},i}(x')$. 
Therefore,
\begin{align}
\Pr[E_2] & = \Pr\left[\sum_{i=1}^m G_i \leq 30(k+1)+10\log\frac{1}{\delta} \right]\nonumber\\
& = \sum_{\vec{a},I} \Pr[\vec{a},I] \cdot \Pr\left[\left.  \sum_{i=1}^m G_i \leq 30(k+1)+10\log\frac{1}{\delta} \right| \vec{a},I \right]\nonumber\\
& \geq \sum_{\vec{a},I} \Pr[\vec{a},I] \cdot \Pr\left[\left.  E_1 \wedge  \sum_{i=1}^m G_i \leq 30(k+1)+10\log\frac{1}{\delta} \right| \vec{a},I \right]\nonumber\\
& = \sum_{\vec{a},I} \Pr[\vec{a},I] \cdot \Pr\left[\left.  E_1 \wedge  \sum_{i=1}^m G_i \leq 2\cdot\max\left\{ \sum\nolimits_{i\in I} f_{\vec{a},i}(x') ,\; 15(k+1)+5\log\frac{1}{\delta}  \right\} \right| \vec{a},I \right]\label{eq:explain6}\\
& = \sum_{\vec{a},I} \Pr[\vec{a},I] \cdot \Pr\left[\left.  E_1 \wedge  \sum_{i=1}^m G_i \leq 2\cdot\max\left\{ \E\left[\left. \sum G_i \right|\vec{a},I\right] ,\; 15(k+1)+5\log\frac{1}{\delta}  \right\} \right| \vec{a},I \right]\label{eq:explain7}\\
& \geq \sum_{\vec{a},I} \Pr[\vec{a},I] \cdot \Pr\left[\left.  \sum_{i=1}^m G_i \leq 2\cdot\max\left\{ \E\left[\left. \sum G_i \right|\vec{a},I\right] ,\; 15(k+1)+5\log\frac{1}{\delta}  \right\} \right| \vec{a},I \right]-\delta\label{eq:explain8}\\
& \geq \sum_{\vec{a},I} \Pr[\vec{a},I] \cdot (1-\delta)-\delta\label{eq:explain9}\\
& = 1-2\delta.\nonumber
\end{align}
Equality~(\ref{eq:explain6}) holds since whenever $E_1$ occurs we have that $\sum\nolimits_{i\in I} f_{\vec{a},i}(x') \leq 15(k+1)+5\log\frac{1}{\delta}$. Equality~(\ref{eq:explain7}) holds since $\E\left[\left.\sum_i G_i \right| \vec{a},I  \right]\leq\sum_{i\in I} f_{\vec{a},i}(x')$. Inequality~(\ref{eq:explain8}) holds since $\Pr[E_1]\geq1-\delta$. Inequality~(\ref{eq:explain9}) follows from the Chernoff bound.
\end{proof}

Lemma~\ref{lem:event} now follows by combining Claim~\ref{claim:E1} and Claim~\ref{claim:E2}.

\section{Proof of Theorem~\ref{thm:kprivacy}}\label{sec:ProofOfComposition}

In this section we prove Theorem~\ref{thm:kprivacy}. 
We begin with a short overview of the proof (the formal proof is given afterwards).
 
\paragraph{Proof overview.}
Fix two neighboring databases $S,S'$. We want to show that the outcome distributions of $\texttt{ThresholdMonitor}(S)$ and $\texttt{ThresholdMonitor}(S')$ are similar. To that end, we ``break'' algorithm \texttt{ThresholdMonitor} into several ``smaller'' components, analyze the privacy guarantees of each component, and then argue about the overall privacy guarantees of \texttt{ThresholdMonitor} using composition theorems. Technically, the definition of these components depends on $S$ and $S'$, but this still allows the argument to go through. These components should not be thought of as standalone algorithms (they only exist as part of the proof of Theorem~\ref{thm:kprivacy}).

\medskip 

Before presenting the formal proof, we state the necessary preliminaries regarding adaptive composition for differential privacy~\cite{DRV10}. Let $\eps,\delta$ be parameters. For $b\in\{0,1\}$, define the following $\ell$-round game against an adversary $A$, denoted as $\texttt{Experiment}_{\ell,\eps,\delta}^{b}(A)$.

\begin{center}
\noindent\fboxother{
\parbox{.9\columnwidth}{
{\bf ${\boldsymbol{\texttt{Experiment}_{\ell,\eps,\delta}^{b}(A)}}$}
\begin{enumerate}[topsep=5pt]
	\item For $i=1,\dots,\ell$:
	\begin{enumerate}
		\item The adversary $A$ outputs two distributions $\DDD_i^0$ and $\DDD_i^1$ such that $\DDD_i^0\approx_{(\eps,\delta)}\DDD_i^1$.
		\item The adversary receives a sample $y_i\sim\DDD_i^b$.
	\end{enumerate}
	\item Output $(y_1,\dots,y_{\ell})$ and the internal randomness of the adversary $A$.
	
\end{enumerate}
}}
\end{center}

In every round of this experiment, the adversary specifies two (similar) distributions $\DDD_i^0$ and $\DDD_i^1$ and gets a sample from $\DDD_i^b$. Intuitively, the adversary's goal is to guess the bit $b$. However, since it is restricted to choose distributions $\DDD_i^0$ and $\DDD_i^1$ such that $\DDD_i^0\approx_{(\eps,\delta)}\DDD_i^1$, then its ability to guess the bit $b$ is limited. Formally, the following theorem states that $A$'s view of the experiment (and in particular, $A$'s guess for $b$) is distributed roughly the same for both values of $b$.

\begin{theorem}[\cite{DRV10}]\label{thm:DRV10}
For every $\eps,\delta,\hat{\delta}\geq0$, every $\ell\in\N$, and every adversary $A$ it holds that 
$$\texttt{Experiment}_{\ell,\eps,\delta}^{0}(A)\approx_{(\hat{\eps},\ell\delta+\hat{\delta})}\texttt{Experiment}_{\ell,\eps,\delta}^{1}(A),$$
where $\hat{\eps}=\sqrt{2\ell\ln\left(\frac{1}{\hat{\delta}}\right)}\cdot\eps+\ell\eps(e^\eps-1)$.
\end{theorem}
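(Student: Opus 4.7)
The plan is to prove the $(\hat\eps, \ell\delta + \hat\delta)$-closeness of the two experiments via the standard privacy-loss random variable argument combined with martingale concentration. The first move is to reduce the per-round $\delta$-slack to a single global $\ell\delta$ term: by the "deletion" characterization of $(\eps,\delta)$-closeness, for every pair $\DDD_i^0 \approx_{(\eps,\delta)} \DDD_i^1$ there exist events $B_i^0, B_i^1$ of probability at most $\delta$ such that the conditional distributions $\tilde\DDD_i^b = \DDD_i^b \mid \neg B_i^b$ satisfy the pointwise ratio bound $e^{-\eps} \le \tilde\DDD_i^0(y)/\tilde\DDD_i^1(y) \le e^\eps$ on their common support. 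Coupling the experiment to this "cleaned" version and taking a union bound over the $\ell$ rounds contributes exactly $\ell\delta$ to the final failure probability, and from here on I may pretend that each $y_i$ is drawn from a pair of pointwise-$e^{\pm\eps}$-close distributions, while remembering that those distributions still depend adaptively on the history $y_1, \dots, y_{i-1}$.

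Next I define the privacy loss random variable in round $i$ by $L_i = \ln\bigl(\tilde\DDD_i^0(y_i)/\tilde\DDD_i^1(y_i)\bigr)$, viewed as a random variable under $\texttt{Experiment}^0$, so that $L_i$ is adapted to the filtration generated by $(y_1,\dots,y_i)$. The two per-round estimates I need are (i) $|L_i| \le \eps$ almost surely (immediate from the pointwise ratio bound), and (ii) $\E[L_i \mid y_1,\dots,y_{i-1}] \le \eps(e^\eps - 1)$, the Dwork--Rothblum--Vadhan bound on the KL divergence of two $e^{\pm\eps}$-close distributions, which is a one-line calculation from the inequality $e^x + e^{-x} - 2 \le x(e^x - e^{-x})/2$. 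These two facts are exactly the inputs for Azuma--Hoeffding applied to the centered martingale $M_t = \sum_{i\le t}(L_i - \E[L_i \mid y_{<i}])$, which yields
\[
\Pr\Bigl[\,\sum_{i=1}^\ell L_i \;>\; \ell\eps(e^\eps - 1) + \eps\sqrt{2\ell\,\ln(1/\hat\delta)}\,\Bigr] \;\le\; \hat\delta.
\]

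The final step is the standard translation from a high-probability bound on the total privacy loss into approximate differential privacy of the whole transcript. Write $\hat\eps = \ell\eps(e^\eps - 1) + \eps\sqrt{2\ell\ln(1/\hat\delta)}$. For any set $F$ of transcripts, split $\Pr[\texttt{Experiment}^0 \in F]$ into the event $\sum_i L_i \le \hat\eps$, on which the pointwise likelihood ratio between the two cleaned experiments is bounded by $e^{\hat\eps}$, and its complement, which has probability at most $\hat\delta$ by the Azuma step. Adding the $\ell\delta$ absorbed by the deletion reduction gives exactly the claimed $(\hat\eps, \ell\delta + \hat\delta)$-closeness. The symmetric inequality follows by swapping the roles of $b = 0$ and $b = 1$.

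The main obstacle I anticipate is handling adaptivity cleanly: the distributions $\DDD_i^b$ and the deletion events $B_i^b$ are themselves random, chosen as functions of the history, so one has to verify that the privacy-loss increments really form an adapted, bounded-difference sequence with the stated conditional expectation bound. This is managed by fixing the filtration generated by $(y_1,\dots,y_\ell)$ and checking that every adversary choice and every cleaned distribution is measurable with respect to the appropriate sigma-algebra, but the notation demands care. The other subtle point is the coupling that converts per-round $\delta$-slack into a single $\ell\delta$ term without double-counting probabilities; this is handled by applying the deletion lemma independently at each step and telescoping the resulting pointwise ratio bound across rounds.
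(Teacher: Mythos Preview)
The paper does not give its own proof of this statement: Theorem~\ref{thm:DRV10} is simply quoted from~\cite{DRV10} as a black box (it is the advanced composition theorem for differential privacy), and is then applied in the proof of Theorem~\ref{thm:kprivacy}. So there is no ``paper's proof'' to compare against.

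That said, your sketch is the standard proof of advanced composition as it appears in~\cite{DRV10} and subsequent expositions: reduce the per-round $\delta$ to a global $\ell\delta$ via a deletion/bad-event argument, define the privacy-loss martingale, bound its conditional expectation by the KL bound $\eps(e^\eps-1)$, and apply Azuma--Hoeffding to the centered sum. The two caveats you already flag are the real ones. First, the deletion step as you phrase it (conditioning each $\DDD_i^b$ on the complement of a bad event to obtain pointwise $e^{\pm\eps}$-close distributions) is not literally correct in that form; the clean statement is that $(\eps,\delta)$-closeness is equivalent to the existence of distributions $\tilde\DDD_i^0,\tilde\DDD_i^1$ within total-variation distance $\delta$ of $\DDD_i^0,\DDD_i^1$ respectively that are $(\eps,0)$-close, and one then couples round by round. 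Second, to recover exactly the constant $\sqrt{2\ell\ln(1/\hat\delta)}$ you need the Hoeffding-range version of Azuma (increments lying in an interval of length $2\eps$), not the cruder bounded-difference form, which would cost an extra factor of $2$. With those two adjustments your argument goes through and matches the original.
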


We are now ready to present the proof of Theorem~\ref{thm:kprivacy}.

\begin{proof}[Proof of Theorem~\ref{thm:kprivacy}]
Fix two neighboring databases $S$ and $S'=S\cup\{x'\}$ for some $x'\in X$. We partition the execution of \texttt{ThresholdMonitor} into $\ell=k/\log\frac{1}{\delta}$ epochs based on the value of the counter $c(x')$, where $x'$ is the data element that appears in $S'$ but not in $S$. Informally, during each epoch, the value of $c(x')$ lies within an interval of length $\log\frac{1}{\delta}$, which ensures that the outcome distribution of each epoch is similar when executing on $S$ or on $S'$ (as in Section~\ref{sec:ThresholdMonitor}). Theorem~\ref{thm:kprivacy} then follows from Theorem~\ref{thm:DRV10}, since \texttt{ThresholdMonitor} can be viewed as the composition of these epochs.

\begin{algorithm*}[t!]
\caption{\bf $
\boldsymbol{\texttt{TM}_{x'}}
$}\label{alg:TMSS}

{\bf Input:} Database $D\in X^*$, privacy parameters $\eps,\delta$, threshold $t$, parameter $k$, counter values $c:X\rightarrow\R$, 
and an adaptively chosen stream of counting queries $f_i:X\rightarrow[0,1]$.

\begin{enumerate}[leftmargin=15pt,rightmargin=10pt,itemsep=1pt,topsep=3pt]

\item Set $c(x')\leftarrow\max\left\{ c(x') ,\; k-\log\frac{1}{\delta} \right\}$. Delete from $D$ every element $x$ such that $c(x)\geq k$.

\item Denote $\Delta=\frac{1}{\eps}\log\left(\frac{1}{\delta}\right)\log\left(\frac{1}{\eps}\log\frac{1}{\delta}\right)$.

\item In each round $i$, when receiving a query $f_i$, do the following:

	\begin{enumerate}[leftmargin=15pt,itemsep=1.5pt,topsep=1pt]
		
		\item Let $w_i\leftarrow\Lap\left(10\Delta\right)$ and let $v_i\leftarrow\Lap(\frac{1}{\eps}\log\frac{1}{\delta})$.
		
		\item Denote $\overline{v}_i=\min\left\{v_i,\; \Delta \right\}$.
		
		\item Let $\hat{f}_i = f_i(D) + w_i + \overline{v}_i$.

		\item If $\hat{f}_i< t$, then output $a_i=\bot$.
		
		\item Otherwise:
		
		\begin{itemize}[leftmargin=20pt,itemsep=1.5pt,topsep=1pt]
			\item Output $a_i=\top$.
			\item For every $x\in X$ set $c(x)\leftarrow c(x)+f_i(x)$.
			\item Delete from $D$ every element $x$ such that $c(x)\geq k$.
		\end{itemize}
		
		\item Proceed to the next iteration.

\end{enumerate}	
\end{enumerate}
\end{algorithm*}

We now give the formal details. Consider algorithm $\texttt{TM}_{x'}$ (presented in Algorithm~\ref{alg:TMSS}). This algorithm is similar to algorithm \texttt{ThresholdMonitor}, except that the counters $c(\cdot)$ are initialized differently. (As we mentioned, this algorithm should not be thought of as a standalone algorithm; it is only part of the analysis.)
Observe that in $\texttt{TM}_{x'}$ we initialize $c(x')\geq k-\log\frac{1}{\delta}$, and that $x'$ is deleted from the data once $c(x')\geq k$. Hence, an identical analysis to that of Section~\ref{sec:ThresholdMonitor} shows that for every neighboring databases that differ on $x'$, say $D$ and $D'=D\cup\{x'\}$, we have that the outcome distribution of $\texttt{TM}_{x'}(D)$ and of $\texttt{TM}_{x'}(D')$ are similar. Specifically, for every adversary $\widetilde{A}$ interacting with $\texttt{TM}_{x'}$ we have that 
$$\widetilde{\BBB}(D)\approx_{\left(O(\eps),3\delta\right)}\widetilde{\BBB}(D'),$$
where $\widetilde{\BBB}$ denotes the function that simulates the interaction and returns the transcript (cf. algorithm~\ref{alg:adaptivealg}). 
Fix an adversary $A$ that interacts with \texttt{ThresholdMonitor}, and let $\BB:X^n\rightarrow\{\bot,\top\}^*$ denote the function that simulates the interaction. We need to show that
$$
\BBB(S)\approx_{(\eps_0,3k\delta)}\BBB(S').
$$ 

We now reformulate algorithm $\BBB$ (ie., the interaction between \texttt{ThresholdMonitor} and the adversary $A$) as $\ell=k/\log\frac{1}{\delta}$ applications of algorithm $\texttt{TM}_{x'}$. Specifically, consider algorithm \texttt{TailoredComposition}, defined below.

\begin{center}
\noindent\fboxother{
\parbox{.9\columnwidth}{
{\bf Algorithm \texttt{TailoredComposition}}\\ 
Input: Database $T$ (where $T$ is either $S$ or $S'$)
\begin{enumerate}[topsep=5pt]
	\item Let $G=1$, and instantiate a counter $\overline{c}(x)=0$ for every $x\in X$.
	\item Instantiate algorithm $\texttt{TM}_{x'}$ on the database $T$ with the counters $\overline{c}$.
	\item For $i=1,2,\dots$ do
	\begin{enumerate}
		\item Obtain the next query $f_i$ from the adversary $A$. 
		\item Give $f_i$ to algorithm $\texttt{TM}_{x'}$ and obtain an answer $a_i$.
		\item Give $a_i$ to the adversary $A$.
		\item If $a_i=\top$ set $\overline{c}(x)\leftarrow\overline{c}+f_i(x)$ for every $x\in X$.
		\item If $\overline{c}(x')\geq G\cdot\log\frac{1}{\delta}$ and $\overline{c}(x')< k$ then 
		\begin{itemize}
			\item Set $G\leftarrow G+1$.
			\item End the current execution of $\texttt{TM}_{x'}$.
			\item Re-instantiate $\texttt{TM}_{x'}(T)$ with the counters $\overline{c}$.
		\end{itemize}
	\end{enumerate}
	\item When $A$ halts, output $z=(q_1,a_1,q_2,a_2,\dots)$.
\end{enumerate}
}}
\end{center}

Now observe that \texttt{TailoredComposition} is identical to $\BBB$, that is, to the function that simulates \texttt{ThresholdMonitor} (with parameter $k$) interacting with the adversary $A$. Specifically, algorithm \texttt{TailoredComposition} keeps track of the counters $\overline{c}$ in a way that is identical to the values of these counters in an execution of \texttt{ThresholdMonitor}, and these counters are given to algorithm $\texttt{TM}_{x'}(T)$ when it is re-executed. 
 Therefore, in order to show that $\BBB(S)\approx_{(\eps_0,3k\delta)}\BBB(S')$ it suffices to show that
$$
\texttt{TailoredComposition}(S) \approx_{(\eps_0,3k\delta)} \texttt{TailoredComposition}(S').
$$
This follows by mapping algorithm \texttt{TailoredComposition} to the settings of Theorem~\ref{thm:DRV10} (the composition theorem).
Specifically, 
for $b\in\{0,1\}$ and $g\in\{1,2,\dots,\ell\}$, let $z_g^b$ denote the portion of the output vector $(q_1,a_1,q_2,a_2,\dots)$ that corresponds to time steps in which $G=g$ during the execution of \texttt{TailoredComposition} on $S_b$, where $S_0=S$ and $S_1=S'$. 
Also, for $g\in\{1,2,\dots,\ell\}$ let $Z_g^b$ be a random variable representing the value of $z_g^b$. Every such $Z_g^b$ is the outcome of the interaction between the adversary and a single execution of algorithm $\texttt{TM}_{x'}(S_b)$. Therefore, for every $g\in\{1,2,\dots,\ell\}$ we have that $Z_g^0\approx_{(O(\eps),3\delta)} Z_g^1$. Thus, by Theorem~\ref{thm:DRV10} we get that
$$\texttt{TailoredComposition}(S)=(Z_1^0,\dots,Z_{\ell}^0)\approx_{(\eps_0,3k\delta)}(Z_1^1,\dots,Z_{\ell}^1)=\texttt{TailoredComposition}(S'),$$
for $\eps_0=O\left(\sqrt{k}\eps+k\eps^2\right)$.

\end{proof}

\end{document}